\newtheorem{theorem}{Theorem}[section]
\newtheorem{lemma}[theorem]{Lemma}
\newtheorem{remark}[theorem]{Remark}
\newtheorem{corollary}[theorem]{Corollary}
\newtheorem{assumption}{Assumption}
\newtheorem{algorithm}{Algorithm}
\newtheorem{proposition}{Proposition}
\newcommand{\bx}{\mathbf x}
\newcommand{\norm}[1]{\left\lVert#1\right\rVert}
\newcommand{\bSigma}{\boldsymbol{\Sigma}}
\definecolor{brinkpink}{rgb}{0.98, 0.38, 0.5}
\definecolor{blue_dark}{rgb}{0.0, 0.5, 0.69}
\newcommand{\eg}{\emph{e.g.}\;}
\newcommand{\ie}{\emph{i.e.}\;}
\icmltitlerunning{Privacy Risk for anisotropic Langevin dynamics using relative entropy bounds}
\begin{document}

\twocolumn[
\icmltitle{Privacy Risk for anisotropic Langevin dynamics using relative entropy bounds}



\icmlsetsymbol{equal}{*}

\begin{icmlauthorlist}
\icmlauthor{Anastasia Borovykh}{1}
\icmlauthor{Nikolas Kantas}{1}
\icmlauthor{Panos Parpas}{2}
\icmlauthor{Greg Pavliotis}{1}
\end{icmlauthorlist}

\icmlaffiliation{1}{Department of Mathematics, Imperial College London, UK}
\icmlaffiliation{2}{Department of Computing, Imperial College London, UK}

\icmlcorrespondingauthor{Anastasia Borovykh}{a.borovykh@imperial.ac.uk}

\icmlkeywords{Langevin dynamics, anisotropic noise, differential privacy}

\vskip 0.3in
]



\printAffiliationsAndNotice{\icmlEqualContribution} 

\begin{abstract}
The privacy preserving properties of Langevin dynamics with additive isotropic noise have been extensively studied. However, the isotropic noise assumption is very restrictive: (a) when adding noise to existing learning algorithms to preserve privacy and maintain the best possible accuracy one should take into account the relative magnitude of the outputs and their correlations; (b) popular algorithms such as stochastic gradient descent (and their continuous time limits) appear to possess anisotropic covariance properties. To study the privacy risks for the anisotropic noise case, one requires general results on the relative entropy between the laws of two Stochastic Differential Equations with different drifts and diffusion coefficients. Our main contribution is to establish such a bound using stability estimates for solutions to the Fokker-Planck equations via functional inequalities. With additional assumptions, the relative entropy bound implies an $(\epsilon,\delta)$-differential privacy bound or translates to bounds on the membership inference attack success and we show how anisotropic noise can lead to better privacy-accuracy trade-offs. 
Finally, the benefits of anisotropic noise are illustrated using numerical results in quadratic loss and neural network setups. 
\end{abstract}
\section{Introduction}
In recent years a plethora of attacks have been defined which successfully extract private information from a trained model \cite{shokri2017membership, melis2019exploiting}. One example is the membership attack \cite{shokri2017membership, jagielski2020auditing} which aims to identify if a particular datapoint was part of the training dataset or not. This leaves individuals who contributed to the training data exposed to privacy threats. A framework for formally quantifying such privacy risks is through the concept of  $(\epsilon,\delta)$ differential privacy (DP) \cite{dwork2014algorithmic}. DP provides a bound on the probability that the output of the algorithm is able to reveal information about individual training points. Understanding the privacy risks of optimisation algorithms used for training ML models is essential for the models' safe application to real life settings. 
\subsection{Motivation}
In this work we will consider the following It\^{o} stochastic differential equation (SDE) dynamics,
\begin{align}\label{eq:SGD_dynamics}
    d\bx_t = -\nabla f(\bx_t)dt + \Sigma(\bx_t)dW_t.
\end{align}
We will be primarily interested in the case where the covariance matrix is anisotropic, \ie when $\Sigma\neq \sigma I$ for some constant $\sigma>0$. These dynamics can be seen as a continuous-time approximation of stochastic gradient descent (SGD) \cite{li-noise-sgd, chaudhari18, wu2020noisy} with $\Sigma$ being anisotropic due to the subsampling when mini-batching. It has been demonstrated numerically in \cite{hyland2019empirical} that this type of intrinsic (and anisotropic) noise in SGD can be leveraged for improving privacy risk. Alternatively, \eqref{eq:SGD_dynamics} can be also used to model cases where noise is added incrementally between iterations of a gradient learning algorithm for the purpose of privacy \cite{dwork2014algorithmic}, for example using Gaussian noise as in the popular Gaussian Mechanism (GM) \cite{balle18a-gaussian-mech}. 
Numerical case studies have shown that when adding noise with the same variance in each coordinate (or neural network layer), it can be hard to tune the variance such that good privacy-accuracy tradeoffs are achieved, e.g. see \cite{abadi2016deep, mo2021quantifying} for examples related to the accuracy of the model and their sensitivity to information extraction attacks. Relatedly, \cite{hall2013differential, chanyaswad2018mvg,yang2021improved, smith2018differentially}
have shown that 
anisotropic noise may result in better accuracy-privacy trade-offs. 
In our setting the accuracy-privacy tradeoff is influenced by the drift of the SDE, so that the loss function plays a role in the impact of anisotropic noise. 

\subsection{Contribution}

Our main result is a general bound on the relative entropy between the laws of two SDEs with different drifts and diffusion coefficients. We will include the possibility of both anisotropic and multiplicative noise. The result provides theoretical justification for injecting \emph{anisotropic} noise into \eqref{eq:SGD_dynamics}: minimising the relative entropy reduces privacy risk and the use of anisotropic noise can improve the overall accuracy at a similar privacy risk level, e.g. by adding noise only in the most critical directions (those with the highest magnitude of the gradient differences). 
We discuss the specific impact of our bound on various privacy risks in Section \ref{sec:insights}. First, the relative entropy bounds the membership inference attack advantage, hence choosing the right noise structure will allow to decrease the attacker's success while preserving accuracy. Second, if the log-Sobolev inequality (LSI) holds for the reference law, concentration of measure results can be obtained~\cite{ledoux1999concentration} and this directly translates the relative entropy bound to an $(\epsilon,\delta)$-DP bound. 
Third, it allows to quantify the impact of the batch size and loss structure on the priavcy risk.  
We end our analysis with numerical explorations of the privacy risks in quadratic and neural network models. 

\noindent
\textbf{Related work:} \cite{chourasia2021differential,altschuler2022privacy} derived time-dependent DP bounds for Langevin dynamics with \emph{isotropic} noise and showed that under certain assumptions the privacy risk converges over time. \cite{ganesh2022langevin} uses uniform stability to obtain tight DP bounds. Other work \cite{minami2015differential, minami2016differential, bogachev2014kantorovich} has considered the privacy risk for the Gibbs measure $\frac{1}{Z}e^{-f}$ and \cite{ganesh2020faster} extend the guarantees to dynamics that approximately sample from the Gibbs measure. An important technical challenge for the anisotropic noise case ($\Sigma\neq \sigma^2 I$ ) is that the SDE in \eqref{eq:SGD_dynamics} is not necessarily ergodic and not reversible with respect to a Gibbs measure $\frac{1}{Z}e^{- \frac{f}{2\sigma^2}}$ \cite{pavliotis14book}[Ch. 4], \cite{duncan2016variance} and different bounds are required. 

To address the challenge outlined above, one could use functional analysis inequalities to obtain bounds on the distance between the densities of two general SDEs at any point in time. \cite{manita2015estimates, bogachev2016distances, eberle2019sticky, wibisono2017information} and \cite{bogachev2014kantorovich, bogachev2018poisson} bound the distance between (non-Gibbs type) invariant measures of SDEs with different drifts. Relatedly, the work of \cite{assaraf2018computation} analyses the stability of a diffusion process with respect to drift perturbations. 
Our main technical contribution consists of extending and modifying  the techniques from \cite{bogachev2016distances} and stability estimates for solutions to Fokker-Planck equations via functional inequalities, so that they become relevant and applicable in the study of privacy-preserving algorithms.  


\noindent
\textbf{Notation and conventions:} We will use the following notation: let $A(x):\mathbb{R}^d\rightarrow \mathbb{R}^{m\times m}$ the divergence is defined as $\nabla\cdot A = \left[\sum_{k=1}^m \partial_{x_k}A_{1k},..., \sum_{k=1}^m \partial_{x_k}A_{mk}\right]^T$, the Frobenius product will be $A: B = \sum_{i}\sum_{j}A_{ij}B_{ij}=Tr(A^TB)$ and we will denote by $H f$ the Hessian matrix of a scalar function $f$, $H f :=D^2 f = \left\{\frac{\partial^2 f }{\partial x_i \partial x_j} \right\}_{i,j=1}^d $.

The relative entropy, or Kullback-Leibler (KL) divergence, between the two absolutely continuous probability measures $p\bx)d\bx$ and $p'(\bx)d\bx$, is given by:
$$KL(p||p') =\int_{\mathbb{R}^d}\log\frac{p(\bx)}{p'(\bx)}p(\bx) d\bx.$$

Let $\nu$ be a probability measure. Define the entropy
$$
\mbox{Ent}_{\nu}(g^2) = \mathbb{E}_\nu[g^2\log g^2]-\mathbb{E}_\nu[g^2]\ln\mathbb{E}_\nu[g^2].
$$
We will define that $\nu$ satisfies the $C$-Log Sobolev Inequality (LSI)
\cite{ledoux2001logarithmic,bakry2014analysis}, when
\begin{align}
    \mbox{Ent}_{\nu}(g^2)\leq C \mathbb{E}_\nu[|\nabla g|^2],\label{eq:CLSI}
\end{align}
for all smooth functions $g$ with $\mathbb{E}_\nu[|\nabla g|^2] < +\infty$.

The simplest example of a measure satisfying the LSI is the Gaussian measure with density $(2\pi)^{-d/2}\exp(-\frac{1}{2}|x|^2)$ with respect to the Lebesgue measure $d\bx$ on $\mathbb{R}^d$~\cite{bakry2014analysis}[Sec. 5.5]. A standard sufficient condition for $\frac{1}{Z}e^{-f(\bx)}d\bx$ to satisfy a C-LSI is strong convexity of $f$ (with $ H f\geq\rho I$) and $C=\frac{2}{\rho}$; see e.g.~\cite{bakry2014analysis}[Sec. 5.7],~\cite{ledoux2001logarithmic}. Extensions for certain non convex functions $f$ exist via Holley-Stroock perturbations \cite{bakry2014analysis}[Prop. 5.1.6], e.g. for sums of strongly convex and bounded smooth functions; see also \cite{ma2019sampling,Schlichting_mixtures} for some examples.

\section{A relative entropy bound between two SDEs}\label{sec:theory}
Consider the two stochastic processes $\bx_t$, $\bx'_t$ satisfying the general dynamics, 
\begin{align}
    &d\bx_t = \mathbf{b}(\bx_t)dt + \Sigma^{\frac{1}{2}}(\bx_t) dW_t,\\
    &d\bx'_t = \mathbf{b}'(\bx'_t)dt + \Sigma'^{\frac{1}{2}}(\bx'_t)dW'_t, \label{eq:general_sde}
\end{align}
where $W_t, \, W_t'$ are standard independent Brownian motions with $\Sigma$, $\Sigma'$ positive definite symmetric matrices, $\Sigma=\Sigma^{\frac{1}{2}}(\Sigma^{\frac{1}{2}})^T$ and similarly for $\Sigma'^{\frac{1}{2}}$. The drift vector fields $\mathbf{b}$ and $ \mathbf{b}'$ are $C^2$ and satisfy the standard dissipativity condition, e.g. $\langle \mathbf{b}(\bx) ,\bx \rangle \leq c_1-c_2 ||\bx||^2$, for some $c_1,c_2>0$, which ensures global existence and uniqueness of solutions, as well as of a unique (but in general unknown) invariant measure
~\cite{MSH02}. We thus allow for both different drifts and covariance matrices and in our setting $\mathbf{b}=-\nabla f$.
Denote with $p_t$ and $p'_t$ the densities of the laws of $\bx_t$ and $\bx'_t$, respectively. For simplicity, we will assume that $\bx_t$ and $\bx_t'$ have the same initial conditions, $\bx_0,\bx'_0\sim p_0$. When this is not satisfied, an additional, exponentially fast decaying, error term appears in the estimates below.  
The density $p_t$ satisfies the Fokker-Planck equation (see Chapter 4.1 in \cite{pavliotis14book}): 
\begin{align} \label{eq:fp_general}
    \partial_t p_t &= \nabla\cdot\left(\nabla f(\bx)p_t+\frac{1}{2}\nabla\cdot (\Sigma(\bx)p_t)\right)=:\mathcal{A}p_t,
\end{align}
and similarly for $p'_t$ with $\mathcal{A}'$.

We begin with a preliminary Lemma which uses similar machinery as the proof of Lemma 2.1 of \cite{bogachev2016distances}. 
\begin{lemma}\label{lem:p_q_fp_short}
Let 
\begin{align}
    \Phi = \frac{(\Sigma'-\Sigma)\nabla p'_t}{p'_t}-(h'-h) \label{eq:Phi}
\end{align} with $$h_i = \left(b_i-\textstyle\sum_{j=1}^d\partial_{x_j}\Sigma_{ij}\right)_{i=1}^d$$ and similarly for $h'$ (using $b'_i$ and $\Sigma'_{ij}$). It holds that
\begin{align}
    &\partial_t p_t = \mathcal{A} p_t,\;\; \partial_t p'_t = \mathcal{A} p'_t + \nabla\cdot(\Phi p'_t).
\end{align}
 
\end{lemma}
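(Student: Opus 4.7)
The first statement $\partial_t p_t = \mathcal{A} p_t$ is simply the standard Fokker--Planck equation for the SDE defining $\mathbf{x}_t$, so the genuine content of the lemma is the identity
\begin{equation*}
\partial_t p'_t \;=\; \mathcal{A} p'_t + \nabla\!\cdot\!(\Phi p'_t).
\end{equation*}
The plan is to start from $\partial_t p'_t = \mathcal{A}' p'_t$ (the Fokker--Planck equation for $\mathbf{x}'_t$), add and subtract $\mathcal{A} p'_t$, and show that the residual $(\mathcal{A}' - \mathcal{A})\,p'_t$ can be rewritten as the divergence of a single flux, matching exactly the $\Phi$ in \eqref{eq:Phi}.

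Writing both operators in divergence form and grouping by differences, the residual becomes
\begin{equation*}
(\mathcal{A}' - \mathcal{A})\,p'_t \;=\; -\nabla\!\cdot\!\bigl((\mathbf{b}'-\mathbf{b})\,p'_t\bigr) \;+\; \tfrac{1}{2}\nabla\!\cdot\!\nabla\!\cdot\!\bigl((\Sigma'-\Sigma)\,p'_t\bigr).
\end{equation*}
The next step is to apply the matrix product rule $\nabla\!\cdot\!(A\rho) = (\nabla\!\cdot\! A)\,\rho + A\nabla\rho$ to $A = \Sigma'-\Sigma$ and $\rho = p'_t$. This splits the second-order term into a part of the form $\nabla\!\cdot\!\bigl[(\nabla\!\cdot\!(\Sigma'-\Sigma))\,p'_t\bigr]$ and a part of the form $\nabla\!\cdot\!\bigl[(\Sigma'-\Sigma)\nabla p'_t\bigr]$. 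The first of these pairs naturally with the drift difference: using the definition of $h$ and $h'$ one has $h'-h = (\mathbf{b}'-\mathbf{b}) - \nabla\!\cdot\!(\Sigma'-\Sigma)$, so the two combine into a single contribution of the form $-\nabla\!\cdot\!\bigl[(h'-h)\,p'_t\bigr]$.

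For the remaining diffusive piece I would multiply and divide by $p'_t$ to introduce the score $\nabla p'_t / p'_t$: writing $(\Sigma'-\Sigma)\nabla p'_t = \bigl[(\Sigma'-\Sigma)\nabla p'_t / p'_t\bigr]\,p'_t$, this term assumes the flux form $\nabla\!\cdot\!\bigl[\bigl((\Sigma'-\Sigma)\nabla p'_t / p'_t\bigr)\,p'_t\bigr]$. Pulling the outer divergence over the sum of the two contributions recovers exactly $\nabla\!\cdot\!(\Phi p'_t)$ with the $\Phi$ defined in \eqref{eq:Phi}, proving the claim.

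The main obstacle is the bookkeeping in the last two steps: one must carefully track which derivatives hit the matrix entries of $\Sigma,\Sigma'$ versus the density $p'_t$, keep the sign convention $\mathbf{b}=-\nabla f$ hidden inside $\mathcal{A}$ consistent across the difference, and reconcile the factor $\tfrac{1}{2}$ in front of the second-order divergence with the stated definition of $h$. A minor technical point is that forming $\nabla p'_t/p'_t$ requires $p'_t>0$, which follows from parabolic smoothing of the Fokker--Planck equation under the dissipativity and non-degeneracy hypotheses on $\mathbf{b}'$ and $\Sigma'$, provided it holds at $t=0$.
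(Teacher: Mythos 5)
Your proposal follows essentially the same route as the paper's proof: start from $\partial_t p'_t=\mathcal{A}'p'_t$, add and subtract $\mathcal{A}p'_t$, apply the product rule to the second-order difference term, and regroup the residual into a single divergence $\nabla\cdot(\Phi p'_t)$; the paper merely carries this out with explicit index sums instead of vector notation. The factor-of-$\tfrac{1}{2}$ reconciliation you flag is in fact a genuine discrepancy rather than mere bookkeeping: with the convention $\mathcal{A}p=-\nabla\cdot(\mathbf{b}p)+\tfrac{1}{2}\nabla\cdot\bigl(\nabla\cdot(\Sigma p)\bigr)$ of \eqref{eq:fp_general}, carrying out your plan faithfully produces the flux $-(\mathbf{b}'-\mathbf{b})p'_t+\tfrac{1}{2}\bigl(\nabla\cdot(\Sigma'-\Sigma)\bigr)p'_t+\tfrac{1}{2}(\Sigma'-\Sigma)\nabla p'_t$, i.e.\ coefficients $\tfrac{1}{2}$ on the $\Sigma$-dependent terms, whereas \eqref{eq:Phi} and the definition of $h$ use unit coefficients --- the paper's own proof retains the $\tfrac{1}{2}$ through the intermediate displays and silently drops it in the final regrouping, so either $\Phi$ and $h$ should carry the $\tfrac{1}{2}$ or the diffusion convention should absorb it.
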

\begin{proof}
Observe that, dropping dependence on $t$ and $\bx$ for ease of notation, 
\begin{align}
    \partial_t p' =& \mathcal{A}'p' = \mathcal{A}p' + (\mathcal{A}'-\mathcal{A})p'\\
    =& -\sum_{i=1}^d\partial_i(b_i'p')+\frac{1}{2}\sum_{i,j=1}^d\partial_{ij}^2(\Sigma'_{ij}p') \\
    &+ \sum_{i=1}^d\partial_j(b_jp') - \frac{1}{2}\sum_{i,j=1}^d\partial_{ij}^2(\Sigma_{ij}p') + \mathcal{A}p'\\
    =& -\sum_{i=1}^d\left(\partial_i(b_i'p')-\partial_i(b_ip')\right) \\
    &+ \frac{1}{2}\sum_{i,j=1}^d \left( \partial_{ij}^2(\Sigma_{ij}'p')-\partial_{ij}^2(\Sigma_{ij}p') \right) + \mathcal{A}p'.
\end{align}
Using the product rule, 
\begin{align}
    &\partial_{ij}^2(\Sigma_{ij}'p') = \partial_{ij}^2(\Sigma_{ij}')p' \\
    &+ \partial_j(\Sigma_{ij}')\partial_i(p')+\partial_i(\Sigma_{ij}')\partial_j(p')+\Sigma_{ij}'\partial_{ij}^2(p'),
\end{align}
and noting that, 
\begin{align}
    &\left( \partial_{ij}^2(\Sigma_{ij}'p')-\partial_{ij}^2(\Sigma_{ij}p') \right) \\
    &= \left(\partial_{ij}^2(\Sigma_{ij}')-\partial_{ij}^2(\Sigma_{ij})\right)p' + 
    \left( \partial_j\Sigma_{ij}'-\partial_j\Sigma_{ij} \right) \partial_ip' \\
    &+ \left(\partial_i\Sigma_{ij}'-\partial_i\Sigma_{ij}\right)\partial_jp' + \left(\Sigma_{ij}'-\Sigma_{ij}\right)\partial_{ij}^2p',
\end{align}
we can rewrite, 
\begin{align}
    \mathcal{A}' p' &= \mathcal{A}p' + \sum_{i=1}^d \partial_i\left(\sum_{j=1}^d(\Sigma'_{ij}-\Sigma_{ij})\partial_j p' \right.\\
    &\left.- (b_i'-b_i)p'+\sum_{j=1}^d\left( \partial_j(\Sigma_{ij}')-\partial_j(\Sigma_{ij})\right)p' \right)\\
    &= \mathcal{A} p' + \nabla\cdot(\Phi p').
\end{align}
\end{proof}

We proceed with our main result. 
\begin{theorem}\label{thm:main}
Consider the stochastic processes $\bx_t$ and $\bx'_t$ as given in \eqref{eq:general_sde} and denote with $p_t(\bx)$, $p'_t(\bx)$ the probability densities of the processes $\bx_t$ and $\bx'_t$, respectively. 
Then,
\begin{align}
    \frac{d}{dt} KL(p_t||p'_t)\leq \frac{1}{2}
    \int p_t  \norm{\Sigma^{-1/2} \Phi}^2 d\bx 
\end{align} 
\end{theorem}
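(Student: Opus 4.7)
The plan is to differentiate $KL(p_t\|p'_t)$ in time, substitute the two Fokker-Planck equations from Lemma~\ref{lem:p_q_fp_short}, rewrite $\mathcal{A}$ in divergence ``probability current'' form, use integration by parts to produce a (negative) $\Sigma$-weighted relative Fisher information, and then absorb the cross term involving $\Phi$ via Young's inequality. Concretely, mass conservation $\int \partial_t p_t\,d\bx = 0$ reduces the time derivative to
\begin{align}
\frac{d}{dt} KL(p_t\|p'_t) = \int \log\tfrac{p_t}{p'_t}\,\partial_t p_t\,d\bx - \int \tfrac{p_t}{p'_t}\,\partial_t p'_t\,d\bx,
\end{align}
into which I substitute $\partial_t p_t = \mathcal{A}p_t$ and $\partial_t p'_t = \mathcal{A}p'_t + \nabla\cdot(\Phi p'_t)$ from the lemma.

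Next I would rewrite $\mathcal{A}q = \nabla\cdot[q\,v(q)]$, with $v(q) = \nabla f + \tfrac{1}{2}\nabla\cdot\Sigma + \tfrac{1}{2}\Sigma\nabla\log q$, so the operator has a transport-plus-gradient-flow shape. Using the identity $\nabla(p_t/p'_t) = (p_t/p'_t)\nabla\log(p_t/p'_t)$ and integrating by parts once in each integral, the two ``common-$\mathcal{A}$'' contributions combine into
\begin{align}
\int p_t\,\nabla\log\tfrac{p_t}{p'_t}\cdot[v(p'_t) - v(p_t)]\,d\bx = -\tfrac{1}{2}\int p_t\,\|\Sigma^{1/2}\nabla\log\tfrac{p_t}{p'_t}\|^2\,d\bx,
\end{align}
since $v(p'_t) - v(p_t) = -\tfrac{1}{2}\Sigma\nabla\log(p_t/p'_t)$ and $\Sigma$ is symmetric positive definite. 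A single integration by parts on the remaining piece contributes the cross term $\int p_t\,\nabla\log(p_t/p'_t)\cdot\Phi\,d\bx$.

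Finally, factoring the integrand of the cross term as $p_t\,[\Sigma^{1/2}\nabla\log(p_t/p'_t)]\cdot[\Sigma^{-1/2}\Phi]$ and applying Young's inequality $a\cdot b \le \tfrac{1}{2}\|a\|^2 + \tfrac{1}{2}\|b\|^2$ gives exactly $\tfrac{1}{2}\int p_t\|\Sigma^{1/2}\nabla\log(p_t/p'_t)\|^2\,d\bx$ plus the target $\tfrac{1}{2}\int p_t\|\Sigma^{-1/2}\Phi\|^2\,d\bx$. The first of these two halves is cancelled exactly by the negative Fisher-information term produced in the previous step, leaving the claimed bound.

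The only delicate step is the ``symmetric'' identity that yields $-\tfrac{1}{2}\int p_t\|\Sigma^{1/2}\nabla\log(p_t/p'_t)\|^2\,d\bx$: one has to verify that the two integrations by parts leave no boundary contribution at infinity (using the dissipativity of $\mathbf{b},\mathbf{b}'$ and the decay/regularity of $p_t, p'_t$ inherited from well-posedness of the Fokker-Planck equations), and make genuine use of the symmetry of $\Sigma$ when rewriting $\nabla\log(p_t/p'_t)\cdot\Sigma\nabla\log(p_t/p'_t)$ as a squared $\Sigma^{1/2}$-norm. Everything else — the choice of current form for $\mathcal{A}$, the cancellation, and the Young splitting with weight $1$ — is essentially algebraic.
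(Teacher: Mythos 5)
Your proposal is correct and follows essentially the same route as the paper: differentiate the KL divergence, invoke Lemma \ref{lem:p_q_fp_short}, integrate by parts (assuming decay at infinity) to produce the negative $\Sigma$-weighted relative Fisher information $-\tfrac12\int p_t\norm{\Sigma^{1/2}\nabla\log\tfrac{p_t}{p'_t}}^2 d\bx$ plus the cross term $\int p_t\,\nabla\log\tfrac{p_t}{p'_t}\cdot\Phi\, d\bx$, and split the latter by Young's inequality with unit weight so that one half cancels the Fisher term. The paper organizes the algebra via explicit Hessian manipulations of $\log\tfrac{p}{p'}$ and the identity $\int(\mathcal{A}p')\tfrac{p}{p'}d\bx$ rather than your cleaner probability-current form $\mathcal{A}q=\nabla\cdot(q\,v(q))$, but this is a presentational difference only.
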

\begin{proof}
Consider the time-derivative of $KL(p_t||p_t')$. By Leibniz's rule and the chain rule:
\begin{align}
    \frac{d}{dt}KL(p_t||p_t') &= \frac{d}{dt}\int p_t \log\frac{p_t}{p_t'}d\bx\\
    &= \int \left((\partial_t p_t)\log\frac{p_t}{p_t'} + p_t\partial_t\left(\log \frac{p_t}{p_t'}\right)\right)d\bx\\
    &= \underbrace{\int (\partial_t p_t)\log\frac{p_t}{p_t'} d\bx}_{[A]} - \underbrace{\int \frac{p_t}{p_t'}\partial_t p_t' d\bx}_{[B]},
\end{align}
where we have differentiated the integrand and re-arranged it and in the last equality we have used $\int \partial_t p_t d\bx = \partial_t\int p_t d\bx = 0$, since $p_t \, dx$ is a probability measure for all $t$. 
We will rewrite term [A]. To ease the notation, we will surpress the dependence of $p_t,p'_t$ on time during the proof. Observe that, 
\begin{align}
    & [A] = \int \left( -\nabla\cdot (\mathbf{b} p) + \frac{1}{2}\nabla\cdot(\nabla\cdot (\Sigma p)) \right) \log\frac{p}{p'} d\bx \\
    &= \int (\mathbf{b} p) \cdot \nabla \log\frac{p}{p'} d\bx - \frac{1}{2}\int \nabla\cdot (\Sigma p)\cdot \nabla\log\frac{p}{p'}d\bx\\
    &= \int (\mathbf{b}p) \cdot\nabla \log\frac{p}{p'} d\bx + \frac{1}{2}\int (\Sigma p): H\left(\log\frac{p}{p'}\right)d\bx \label{eq:final_term_1},
\end{align}
where in the first equality we use the Fokker-Planck equation for $p$; in the second equality we apply integration by parts 
where we assume that the boundary term vanishes; in the third equality we apply again integration by parts to the second term for the divergence of a matrix 
and use the fact that the solution decays sufficiently fast at infinity~\cite{pavliotis14book}[Thm 4.1]. 
Note that $\nabla \log \frac{p}{p'} = \frac{p'}{p}\nabla\frac{p}{p'}$ and
$H\left(\log \frac{p}{p'}\right)_{ij} = \frac{p'}{p}\partial_{ij}^2\frac{p}{p'} - \left(\frac{p'}{p}\right)^2\partial_{i}\frac{p}{p'}\partial_{j}\frac{p}{p'}.$
Using this we can rewrite $[A]$ as, 
\begin{align}
    [A] =& \int \left( (\mathbf{b}p')\cdot\nabla\frac{p}{p'} + \frac{1}{2} (\Sigma p'):H\left(\frac{p}{p'}\right)\right) d\bx \\
    &- \frac{1}{2} \int p \left(\frac{p'}{p}\right)^2\left(\sum_{i,j=1}^d\Sigma_{ij}\partial_i\frac{p}{p'}\partial_j\frac{p}{p'}\right)d\bx. \label{eq:final_final_term_1}
\end{align}
Observe that by integration by parts and under our regularity conditions the solutions of the two Fokker-Planck equations decay sufficiently fast at infinity~\cite{pavliotis14book}[Thm 4.1], 
\begin{align}
    &\int (\mathcal{A} p') \frac{p}{p'}d\bx \\
    &= \int \left( -\nabla\cdot (\mathbf{b} p') + \frac{1}{2}\nabla\cdot(\nabla\cdot (\Sigma p')) \right) \frac{p}{p'} d\bx \\
    &= \int (\mathbf{b} p') \cdot \nabla \frac{p}{p'} d\bx + \frac{1}{2}\int (\Sigma p'): H\left(\frac{p}{p'}\right)d\bx.
\end{align}
Using this expression in \eqref{eq:final_final_term_1}
and combining with Lemma \ref{lem:p_q_fp_short} we rewrite, 
\begin{align}
    [A] &=\int \frac{p}{p'} (\mathcal{A}' p')d\bx - \int \frac{p}{p'}\nabla\cdot(\Phi p')d\bx \\
    &- \frac{1}{2} \int p \left(\frac{p'}{p}\right)^2\left(\sum_{i,j=1}^d\Sigma_{ij}\partial_i\frac{p}{p'}\partial_j\frac{p}{p'}\right)d\bx.
\end{align}
Note that the last term in the above expression can be rewritten as $-\frac{1}{2}\int p' \frac{p'}{p} \norm{\Sigma^{1/2} \nabla\left(\frac{p}{p'}\right)}^2$. 
Therefore,
\begin{align}
    &\frac{d}{dt}KL(p||p')=\\
    & \underbrace{-\int \frac{p}{p'} \nabla\cdot(\Phi p') d\bx}_{[C]}\underbrace{ -\frac{1}{2}\int p' \frac{p'}{p} \norm{\Sigma^{1/2} \nabla\left(\frac{p}{p'}\right)}^2}_{[D]}.\label{eq:KLequality}
\end{align}
where  we have replaced term [B] above with $\int\frac{p}{p'}\partial_tp'd\bx=\int\frac{p}{p'}\mathcal{A}'p'd\bx$. 
Using integration by parts and the decay of the densities as $|x| \rightarrow +\infty$,
\begin{align}
    [C] = \int p' \nabla \left(\frac{p}{p'}\right) \cdot\Phi d\bx.
\end{align}
The final step is to rewrite term [C] as
\begin{align}
    [C] &= \int p' \frac{p}{p'}\frac{p'}{p}\nabla\left(\frac{p}{p'}\right)\cdot\Phi d\bx\\
    &= \int p' \frac{p'}{p} \bigg\langle \frac{p}{p'}\Phi,\nabla\frac{p}{p'}\bigg\rangle d\bx.
\end{align}
Multiplying by $\Sigma\Sigma^{-1}$ and applying Young's inequality the integrand becomes, 
\begin{align}
    &\bigg|p' \frac{p'}{p} \bigg\langle \frac{p}{p'}\Phi,\nabla\frac{p}{p'}\bigg\rangle\bigg| \\
    &= \bigg| p' \frac{p'}{p} \bigg\langle \Sigma^{-1/2} \frac{p}{p'}\Phi,  \Sigma^{1/2} \nabla\frac{p}{p'}\bigg\rangle \bigg| \label{eq:young_bound_proof}\\
    &\leq \frac{1}{2} p' \frac{p'}{p}  \norm{\Sigma^{-1/2} \Phi \frac{p}{p'}}^2 + \frac{1}{2} p' \frac{p'}{p} \norm{\Sigma^{1/2} \nabla\frac{p}{p'}}^2\\
    &= \frac{1}{2}p  \norm{\Sigma^{-1/2} \Phi}^2 + \frac{1}{2} p' \frac{p'}{p} \norm{\Sigma^{1/2} \nabla\frac{p}{p'}}^2.
\end{align}
 so that the second term will cancel with term [D] and the result follows.
\end{proof}
We proceed with the following straightforward corollary that follows 
since $\bx_0$ and $\bx'_0$ share the same distribution. 
\begin{corollary}\label{cor:bound}
Under the same assumptions as Theorem \ref{thm:main} it holds, 
\begin{align} \label{eq:kl_simplified}
    KL(p_t||p'_t) \leq \frac{1}{2}\int_0^t \int p_s(\bx)  \norm{\Sigma^{-1/2}(\bx) \Phi(s,\bx)}^2 d\bx ds,
\end{align}
where we highlight the dependence on $(s,\bx)$ in $\Phi$. 
\end{corollary}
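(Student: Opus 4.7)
The plan is to obtain Corollary \ref{cor:bound} as a direct integration of the differential inequality furnished by Theorem \ref{thm:main}. Since the statement is explicitly flagged as a straightforward consequence, the proof should be short: the only genuine ingredients are the pointwise-in-time bound on $\frac{d}{dt}KL(p_t\|p'_t)$ together with the fact that the two processes share an initial law.

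First, I would invoke Theorem \ref{thm:main} to write
\begin{equation}
\frac{d}{dt} KL(p_t\|p'_t) \;\leq\; \frac{1}{2}\int p_t(\bx)\,\norm{\Sigma^{-1/2}(\bx)\,\Phi(t,\bx)}^2\,d\bx,
\end{equation}
where $\Phi$ is as in \eqref{eq:Phi}, and I make the dependence on $t$ (through $p'_t$ and its gradient) explicit so that the right-hand side can be viewed as a function $s\mapsto g(s)$ that is integrable on $[0,t]$ under our standing regularity assumptions.

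Next I would integrate this inequality over the interval $[0,t]$. By the fundamental theorem of calculus applied to $s\mapsto KL(p_s\|p'_s)$ (which is absolutely continuous in $s$ given the smoothness of the Fokker--Planck flows and the dissipativity conditions on $\mathbf{b},\mathbf{b}'$), this yields
\begin{equation}
KL(p_t\|p'_t) - KL(p_0\|p'_0) \;\leq\; \frac{1}{2}\int_0^t\!\!\int p_s(\bx)\,\norm{\Sigma^{-1/2}(\bx)\,\Phi(s,\bx)}^2\,d\bx\,ds.
\end{equation}

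Finally, the assumption in the setup that $\bx_0$ and $\bx'_0$ have the same initial law $p_0$ means $p_0 = p'_0$ almost everywhere, so $KL(p_0\|p'_0)=0$, and the claimed bound \eqref{eq:kl_simplified} follows. The main (and only) technical subtlety to mention is the justification that $KL(p_s\|p'_s)$ is absolutely continuous in $s$ so that integrating the pointwise differential inequality is legitimate; this is immediate from the smoothness and decay of the densities guaranteed by the dissipativity of the drifts (as already used in the proof of Theorem \ref{thm:main} to discard boundary terms in the integration by parts).
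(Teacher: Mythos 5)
Your proposal is correct and matches the paper's (implicit) argument exactly: the corollary is obtained by integrating the differential inequality of Theorem \ref{thm:main} over $[0,t]$ and using that $\bx_0$ and $\bx'_0$ share the same law, so that $KL(p_0\|p'_0)=0$. Your additional remark on the absolute continuity of $s\mapsto KL(p_s\|p'_s)$ is a reasonable technical point that the paper leaves unstated.
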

When $\Sigma=\Sigma'$ and do not depend on $\bx$, we have $\Phi = (\mathbf{b}(\bx)-\mathbf{b}'(\bx))$ so the bound coincides with the exact expression obtained using Girsanov densities between the laws of the processes; \cite{sanz2017gaussian}[Lemma 2.1]. We believe this bound will remain sharp also when there is dependence in $\bx$ and $\Sigma$ is different from $\Sigma'$ due to the additional terms that then arise in $\Phi$. The advantage of Theorem \ref{thm:main} is that it does not require absolute continuity between the laws of the SDEs that fails when $\Sigma\neq\Sigma'$ and we recover a more general expression by working directly with the time marginals via $p_t,p'_t$ and the Fokker-Planck equation. 


Using standard concentration of measure results from \cite{ledoux1999concentration}[Cor. 2.6] from the above statement the following result follows. 
\begin{corollary}\label{cor:conc_of_measure}
Let $F:=\log\frac{p_t}{p'_t}$ be a Lipschitz function with constant $||F||_{Lip}$ and let the measure $\nu_t(d\bx):=p_t(\bx)d\bx$ satisfy the LSI with constant $C_t$. Then we have the following concentration inequality: 
\begin{equation}
\nu_t\left(F\geq r +KL(p_t||p'_t)\right)\leq\exp\left(-\frac{ r^{2}}{C_t\left\Vert F\right\Vert_{Lip}^{2}}\right)\label{eq:concentration}.
\end{equation}
\end{corollary}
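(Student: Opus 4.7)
The plan is to recognize this as a direct application of the Herbst argument / Gaussian concentration principle that follows from the log-Sobolev inequality, combined with the observation that the expectation of $F$ under $\nu_t$ is exactly the KL divergence.

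First I would observe that, by definition of the relative entropy,
\begin{align}
\mathbb{E}_{\nu_t}[F] = \int \log\frac{p_t(\bx)}{p'_t(\bx)}\, p_t(\bx)\, d\bx = KL(p_t\,||\,p'_t).
\end{align}
Hence the event $\{F \geq r + KL(p_t||p'_t)\}$ is precisely $\{F \geq \mathbb{E}_{\nu_t}[F] + r\}$, which reframes the claim as a standard deviation-from-the-mean concentration inequality.

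Next I would invoke the Herbst argument in the form of \cite{ledoux1999concentration}[Cor. 2.6]: if $\nu$ satisfies a $C$-LSI as in \eqref{eq:CLSI} and $G$ is a Lipschitz function with constant $\|G\|_{Lip}$, then for every $r>0$,
\begin{align}
\nu\bigl(G \geq \mathbb{E}_\nu[G] + r\bigr) \leq \exp\Bigl(-\tfrac{r^2}{C\,\|G\|_{Lip}^2}\Bigr).
\end{align}
The standard derivation proceeds by applying the LSI to $g = e^{\lambda G/2}$, obtaining a differential inequality for the Laplace transform $\Lambda(\lambda) = \log \mathbb{E}_\nu[e^{\lambda G}]$ of the form $\Lambda'(\lambda)/\lambda - \Lambda(\lambda)/\lambda^2 \leq C\|G\|_{Lip}^2/4$, integrating to bound $\Lambda(\lambda) - \lambda \mathbb{E}_\nu[G] \leq C\|G\|_{Lip}^2 \lambda^2/4$, and then applying Markov/Chernoff with the optimal choice $\lambda = 2r/(C\|G\|_{Lip}^2)$.

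Applying this result with $\nu = \nu_t$, $C=C_t$ and $G = F$ yields exactly \eqref{eq:concentration}. The only points worth checking are mild regularity conditions: $F$ is assumed Lipschitz and $\nu_t$ satisfies an LSI by hypothesis, so the Herbst argument applies without modification. There is no real obstacle here; the substance of the result is encapsulated in the LSI hypothesis and the identification of $\mathbb{E}_{\nu_t}[F]$ with the KL divergence bounded in Corollary \ref{cor:bound}.
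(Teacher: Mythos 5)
Your proposal is correct and follows exactly the route the paper takes: the paper's own justification is simply to note that $\mathbb{E}_{\nu_t}[F]=KL(p_t||p'_t)$ and to cite the Herbst-argument concentration bound of \cite{ledoux1999concentration}[Cor.~2.6] for Lipschitz functions under a $C_t$-LSI. Your write-up additionally fills in the standard derivation of that cited result (the differential inequality for the log-Laplace transform and the Chernoff optimisation), which the paper leaves implicit, and the constants match the LSI normalisation in \eqref{eq:CLSI}.
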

Both the Lipschitz and LSI requirements are standard for obtaining concentration bounds and challenging to relax \cite{massart2007concentration}. 
\begin{remark}\label{rem:lsi}
In the isotropic (and reversible) Langevin case with $\mathbf{b}=-\nabla f(\bx)$ and $\Sigma=\sigma^2 I$, checking the LSI for $\nu_t$ is standard. To make this point clearer, suppose $f$ is strongly convex with $Hf\geq\kappa I$, then the Gibbs measure $\frac{1}{Z}e^{- \frac{f}{2\sigma^2}}$ will satisfy the LSI with constant $C=\frac{2}{\rho}$ and $\rho=\frac{\sigma^2\kappa}{2}$. In this case, the LSI for the invariant distribution and the Markov semigroup and are equivalent~\cite{bakry1997sobolev} and $\nu_t$  will obey the LSI with constant $C_t = \frac{2}{\rho}\left(1 -e^{-\rho t} \right)+C_0 e^{- \rho t}$ with $C_0$ being the LSI constant of $\nu_0$.
\end{remark}
We stress that Remark \ref{rem:lsi} is not necessarily true in the nonreversible/anisotropic case and more assumptions are required on $\mathbf{b},\Sigma$ to apply these ideas.


\subsection{Comparison for the isotropic Langevin case}
In this section we will compare our bound is consistent with previously obtained results for the 
isotropic Langevin case discussed in Remark \ref{rem:lsi} 
and consider the following dynamics:
\begin{align}\label{eq:langevin_standard}
    &d\bx_t = -\nabla f(\bx_t)dt + \sigma dW_t.\\
    &d\bx'_t = -\nabla f'(\bx'_t)dt + \sigma' dW'_t.
\end{align}
For the remainder of the section we will denote $\bx_*,\bx'_*$ to be the minimisers of $f$ and $f'$ respectively and use following assumption.
\begin{assumption}\label{ass:1}
Suppose $f,f'$ are strongly convex with constants $\kappa,\kappa'$ respectively and $\nabla f,\nabla f'$ are Lipschitz functions with constants $L,L'$ respectively.
\end{assumption} 
Since $f,f'$ are strongly convex, then as mentioned in Remark \ref{rem:lsi} $p'_t(\bx)d\bx$ satisfies a $C'_t$-LSI and directly using $g^2=\frac{p_t}{p_t'}$ in \eqref{eq:CLSI} we get the simplified expression:
\begin{align} \label{eq:lsi_property}
    KL(p_t||p'_t) 
    &\leq C'_t \int \norm{\nabla \sqrt{\frac{p_t}{p'_t}}}^2p'_t d\bx\\
    &= \frac{C'_t}{4} \int p_t \norm{\nabla \log \frac{p_t}{p'_t}}^2d\bx.
\end{align}
 Compared to \eqref{eq:lsi_property}, Theorem \ref{thm:main} is more general 
and contains an explicit dependence on $\Sigma$ and $(\nabla f-\nabla f')$ compared to $\nabla\log \frac{p_t}{p'_t}$, hence making the interplay between noise structure and landscape of the gradient magnitudes apparent. 
We can combine \eqref{eq:lsi_property} with Theorem \ref{thm:main} to obtain the following proposition.
\begin{proposition}
\label{lem:KLbound}
Let Assumption \ref{ass:1} hold and assume $p_0(\bx)d\bx$ satisfies an LSI with constant $C_0$. Then for all $t>0$, we have 
{\small
\begin{align}
    KL(p_t||p'_t)  \leq & \frac{8L'^2}{\sigma^2} \left\{2(\frac{L^2}{L'^2}+2)(\frac{\sigma^2}{2\kappa} +1)+ ||\bx_*-\bx'_*||^2\right\}\\
    &\qquad\times\frac{(4+C_0{\sigma'^2\kappa'})}{\sigma^2{\sigma'^2\kappa'}}.
\end{align}
}
\end{proposition}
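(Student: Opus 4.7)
The plan is to combine the entropy inequality underlying Theorem~\ref{thm:main} with the log-Sobolev inequality that Remark~\ref{rem:lsi} furnishes for $p'_t$, in order to produce a dissipative linear ODE for $KL(p_t\,\|\,p'_t)$ whose Gronwall solution is uniform in $t$.

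First I would revisit the Young inequality step~\eqref{eq:young_bound_proof} in the proof of Theorem~\ref{thm:main} and use the asymmetric form $|\langle u,v\rangle|\le\norm{u}^2+\tfrac14\norm{v}^2$ in place of the symmetric $\tfrac12\norm{u}^2+\tfrac12\norm{v}^2$. The symmetric split cancels $[D]$ completely; the asymmetric one preserves a quarter of it. In the isotropic case $\Sigma=\sigma^2 I$ this gives
\begin{equation*}
\frac{d}{dt}KL(p_t\|p'_t)\le \frac{1}{\sigma^2}\int p_t\norm{\Phi}^2\,d\bx-\frac{\sigma^2}{4}\int p_t\norm{\nabla\log\tfrac{p_t}{p'_t}}^2 d\bx.
\end{equation*}
Since $f'$ is $\kappa'$-strongly convex and $p_0=p'_0$ satisfies a $C_0$-LSI, Remark~\ref{rem:lsi} applied to the primed process (with $\rho'=\sigma'^2\kappa'/2$) gives that $p'_t\,d\bx$ obeys an LSI whose time-dependent constant is uniformly bounded by $\bar C:=(4+C_0\sigma'^2\kappa')/(\sigma'^2\kappa')$. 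Testing this LSI with $g^2=p_t/p'_t$ as in~\eqref{eq:lsi_property} supplies $\int p_t\norm{\nabla\log(p_t/p'_t)}^2 d\bx\ge 4\,KL(p_t\|p'_t)/\bar C$, which turns the dissipation term into $-(\sigma^2/\bar C)\,KL(p_t\|p'_t)$. Since $KL(p_0\|p'_0)=0$, Gronwall on the resulting linear ODE yields
\begin{equation*}
KL(p_t\|p'_t)\le \frac{\bar C}{\sigma^4}\sup_{s\ge 0}\int p_s\norm{\Phi}^2\,d\bx,
\end{equation*}
which already accounts for the factor $(4+C_0\sigma'^2\kappa')/(\sigma^2\sigma'^2\kappa')$ of the stated inequality.

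What remains is $\sup_s\int p_s\norm{\Phi}^2 d\bx$. In the isotropic setting $\Phi=\nabla f'-\nabla f+(\sigma'^2-\sigma^2)\nabla\log p'_t$, where the last term vanishes when $\sigma=\sigma'$ and is controllable through strong convexity of $f'$ otherwise (writing $\nabla\log p'_t$ as $-2\nabla f'/\sigma'^2$ up to an exponentially decaying relaxation correction). Using $\nabla f(\bx_*)=\nabla f'(\bx'_*)=0$ together with the Lipschitz constants,
\begin{equation*}
\norm{\nabla f(\bx)-\nabla f'(\bx)}^2\le 2L^2\norm{\bx-\bx_*}^2+2L'^2\norm{\bx-\bx'_*}^2,
\end{equation*}
and then expanding $\norm{\bx-\bx'_*}^2\le 2\norm{\bx-\bx_*}^2+2\norm{\bx_*-\bx'_*}^2$ collapses the $\bx$-dependent part into $2(L^2+2L'^2)\norm{\bx-\bx_*}^2$. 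Strong convexity of $f$ gives the standard Itô estimate $\frac{d}{dt}\mathbb{E}\norm{\bx_t-\bx_*}^2\le -2\kappa\,\mathbb{E}\norm{\bx_t-\bx_*}^2+\sigma^2 d$, so $\sup_s\mathbb{E}\norm{\bx_s-\bx_*}^2$ is bounded by a multiple of $\sigma^2/(2\kappa)+1$ (the $+1$ absorbing the initial moment and any dimensional factor). Combining these estimates gives $\sup_s\int p_s\norm{\Phi}^2 d\bx\le 8L'^2\{2(L^2/L'^2+2)(\sigma^2/(2\kappa)+1)+\norm{\bx_*-\bx'_*}^2\}$, and multiplying by $\bar C/\sigma^4$ returns the proposition.

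The main technical obstacle is the $(\sigma'^2-\sigma^2)\nabla\log p'_t$ contribution to $\Phi$: bounding its $L^2(p_t)$ norm requires a score estimate for the primed Langevin density evaluated under the un-primed law, which is delicate when the two diffusions differ. A secondary but important issue is tracking constants carefully through the Young step and the moment estimate so that the precise prefactor $8L'^2$ and the specific additive $+1$ inside $\sigma^2/(2\kappa)+1$ emerge exactly as stated.
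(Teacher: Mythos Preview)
Your proposal follows precisely the paper's approach: re-run the Young step of Theorem~\ref{thm:main} with the asymmetric split $|\langle u,v\rangle|\le\norm{u}^2+\tfrac14\norm{v}^2$ so that a quarter of the Fisher-type term $[D]$ survives, convert it via the LSI for $p'_t$ (Remark~\ref{rem:lsi}, giving the uniform constant $\bar C=(4+C_0\sigma'^2\kappa')/(\sigma'^2\kappa')$) into linear dissipation, apply Gr\"onwall, and bound the source term through the Lipschitz decomposition and the second-moment estimate for $\bx_t$ (this is exactly the paper's Lemma~\ref{lem:bound_nablas} with $M=1$ combined with Lemma~\ref{lem:convergence}). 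The obstacle you worry about---the $(\sigma'^2-\sigma^2)\nabla\log p'_t$ contribution to $\Phi$---is not handled in the paper's proof either: the paper simply substitutes $\Phi=\nabla f-\nabla f'$ after Gr\"onwall, so you may set that concern aside and your sketch then lines up with the paper's argument exactly.
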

The proof can be found in Appendix \ref{app:proof_kl_bound_langevin}. 

\begin{remark}
The time uniform bound on $ KL(p_t||p'_t) $ requires both the LSI for $p'_t$ and Theorem \ref{thm:main} and is based on a Gr\"onwall inequality. The general anisotropic case is more challenging. Even when the LSI can be established, one would require controls for terms like $\sum_{j=1}^d\partial_{x_j}\Sigma_{ij}$ and
$(\Sigma'-\Sigma)\nabla \log p'_t$ in $\Phi$. For the latter one could use classical Gaussian density bounds for $p'_t$ \cite{aronson1967bounds,elworthy1994formulae,norris1991estimates,sheu1991some}. 
\end{remark}

A conclusion from the proof is that the relative entropy bound does not grow indefinitely 
and the privacy risk stagnates after a (possibly large) number of iterations. This is consistent with previous results in \cite{chourasia2021differential}. For the stationary regime and $t\rightarrow\infty$ we can compare the result of Proposition \ref{lem:KLbound} with applying the bound in \eqref{eq:lsi_property} directly to invariant densities $p_\infty,p'_\infty$. Both invariant densities are unique~\cite{pavliotis14book}[Ch. 4] and are trivial to characterise. For $\bx_t$ this is given by the Gibbs measure, $p_{\infty}(\bx) = \frac{1}{Z}e^{-\frac{f(\bx)}{2\sigma^2}}$ with $Z = \int e^{-\frac{f(\bx)}{2\sigma^2}} d\bx$ and similarly $p'_\infty(\bx)\propto e^{-\frac{f'(\bx)}{2\sigma'^2}}$ for $\bx'_t$. For the stationary case under Assumption \ref{ass:1} it is possible to work directly with \eqref{eq:lsi_property}. For the sake of comparison, we state a result similar to Proposition \ref{lem:KLbound} for the stationary case that uses only \eqref{eq:lsi_property}\footnote{It is possible here to use \eqref{eq:lsi_property} without Theorem \ref{thm:main} because $p_\infty$, $p'_\infty$ are known; see Remark \ref{rem:lsi}.}. 

\begin{proposition}\label{lem:KLbound_simpler}
Let Assumption \ref{ass:1} hold. Then 
{\small
\[ KL(p_\infty||p'_\infty)  \leq  \frac{L'^2}{2\kappa'\sigma'^6} \left\{2\left(\frac{\sigma'^2 L^2}{\sigma^2 L'^2}+2\right)\frac{\sigma^2}{2\kappa} + ||\bx_*-\bx'_*||^2\right\}
\]}
\end{proposition}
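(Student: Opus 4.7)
The plan is to apply \eqref{eq:lsi_property} directly at the invariant densities, which under Assumption \ref{ass:1} are known in closed form as $p_\infty \propto e^{-f/(2\sigma^2)}$ and $p'_\infty \propto e^{-f'/(2\sigma'^2)}$. Strong convexity of $f'$ guarantees, by Bakry-\'Emery (cf.\ Remark \ref{rem:lsi}), that $p'_\infty$ satisfies a log-Sobolev inequality with an explicit constant $C'_\infty$ depending only on $\kappa'$ and $\sigma'$. Using \eqref{eq:lsi_property} with $p_t,p'_t$ replaced by $p_\infty,p'_\infty$ then reduces the task to bounding $\int p_\infty \norm{\nabla \log(p_\infty/p'_\infty)}^2 d\bx$. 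No appeal to Theorem \ref{thm:main} is needed because both invariant densities are explicit.

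The first step is a pointwise score-difference bound. One computes $\nabla\log(p_\infty/p'_\infty) = \nabla f'(\bx)/(2\sigma'^2) - \nabla f(\bx)/(2\sigma^2)$, and using $\nabla f(\bx_*) = 0 = \nabla f'(\bx'_*)$ I rewrite this as
\[
\tfrac{1}{2\sigma'^2}\bigl(\nabla f'(\bx)-\nabla f'(\bx'_*)\bigr) - \tfrac{1}{2\sigma^2}\bigl(\nabla f(\bx)-\nabla f(\bx_*)\bigr).
\]
Young's inequality combined with the Lipschitz bounds from Assumption \ref{ass:1} then yield a pointwise estimate of the shape $a\norm{\bx-\bx'_*}^2 + b\norm{\bx-\bx_*}^2$, with $a,b$ explicit in $L,L',\sigma,\sigma'$. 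A further elementary splitting $\norm{\bx-\bx'_*}^2 \leq 2\norm{\bx-\bx_*}^2 + 2\norm{\bx_*-\bx'_*}^2$ lets me express everything in terms of $\int p_\infty \norm{\bx-\bx_*}^2 d\bx$ plus the separation $\norm{\bx_*-\bx'_*}^2$; this is precisely where the latter term in the conclusion originates.

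The second step is to integrate against $p_\infty$. Strong convexity of $f$ makes $p_\infty$ strongly log-concave, so standard moment estimates for strongly log-concave measures (Brascamp-Lieb / Poincar\'e) control $\int p_\infty \norm{\bx-\bx_*}^2 d\bx$ by a multiple of $\sigma^2/\kappa$. Multiplying the resulting bound by $C'_\infty/4$ and collecting factors of $\sigma,\sigma',\kappa,\kappa'$ then gives the stated inequality. The main obstacle is bookkeeping: one must choose the Young-inequality split and the precise second-moment constant so that the final prefactors line up with those in the proposition, since a generic split produces a bound of the same form but with slightly different constants.
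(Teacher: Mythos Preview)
Your proposal is correct and follows essentially the same route as the paper: apply the LSI for $p'_\infty$ (with $C'=\frac{4}{\sigma'^2\kappa'}$) to reduce to $\int p_\infty\norm{\nabla\log(p_\infty/p'_\infty)}^2d\bx$, insert the explicit score difference $\frac{1}{2\sigma'^2}\nabla f'-\frac{1}{2\sigma^2}\nabla f$, then use $\nabla f(\bx_*)=\nabla f'(\bx'_*)=0$, Lipschitz bounds, and the splitting $\norm{\bx-\bx'_*}^2\leq 2\norm{\bx-\bx_*}^2+2\norm{\bx_*-\bx'_*}^2$ --- exactly the content of the paper's Lemma~\ref{lem:bound_nablas} with $M=\sigma^2/\sigma'^2$. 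The only cosmetic difference is that you bound $\int p_\infty\norm{\bx-\bx_*}^2d\bx$ via Brascamp--Lieb/Poincar\'e for the strongly log-concave $p_\infty$, whereas the paper takes the $t\to\infty$ limit of its SDE convergence estimate (Lemma~\ref{lem:convergence_app}); both yield a bound of order $\sigma^2/\kappa$.
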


The proof can be found in Appendix \ref{app:gibbs}. For simplicity we can compare the bounds in Propositions \ref{lem:KLbound} and \ref{lem:KLbound_simpler}  for the case $\sigma=\sigma'$ and $t\rightarrow\infty$. Denote with $B_1$ and $B_2$ each bound respectively, then $\frac{B_1}{B_2}=16(4+C_0{\sigma'^2\kappa'})$, 
where we note that for $t\rightarrow\infty$ the term $\frac{\sigma^2}{2\kappa} +1$ in Proposition \ref{lem:KLbound} simplifies\footnote{$1$ is used as an upper bound for $e^{-2\kappa t}$; see Remark \ref{rem:const_infty} in the Appendix.} to $\frac{\sigma^2}{2\kappa}$. This ratio should be expected given the time varying bound in Proposition  \ref{lem:KLbound} uses Gr\"onwall's inequality and $\bx_t$ is initialised away from stationarity at $p_0$.

\section{From relative entropy to privacy risk}\label{sec:insights}
In this section we will apply the general result in Theorem \ref{thm:main} to privacy risks. We will use the notation $-\nabla f(\bx,D)$ instead of $\mathbf{b}(\bx)$, with $D=\{d_1,\ldots,d_N\}$ being a training data-set with $d_i\in\mathbb{R}^m$. In the context of privacy, we are interested to establish that a computationally powerful adversary with access to $D$ has low chance of guessing from the output of $\bx_t$ of \eqref{eq:SGD_dynamics} whether an arbitrary data-point $d'\in D$ was used in the training. In this context $\mathbf{b}'(\bx)=-\nabla f(\bx,D')$ with $D'=D\setminus\{d'\}$.

In the literature $\log\frac {p_t(\bx)}{p'_t(\bx)}$ is often referred to as the \emph{privacy} loss \cite{dwork2014algorithmic}. First note that the relative entropy bound can then be interpreted as the \emph{average} privacy loss, where the average is taken over the randomness of the parameter values as determined by the training algorithm. Furthermore, as we will show in this section, the relative entropy can be related to other ways of measuring privacy risks; specifically, the membership attack risk as well the $(\epsilon,\delta)$-differential privacy.  

\subsection{Membership attacks} 
Membership attacks often assume that the adversary can design a set $\mathcal{A}$ and determine that the training was based on $D$ if $\bx_t\in \mathcal{A}$ \cite{mahloujifar2022optimal,shokri2017membership, ye2021enhanced}). Under this framework the advantage of the adversary after $t$ training time has elapsed can be expressed using
\begin{align}
\textnormal{Adv}_t := \sup_{\mathcal{A}}|\mathbb{P}(\bx_t\in\mathcal{A}|D)-\mathbb{P}(\bx_t\in\mathcal{A}|D')|,
\end{align}
where the supremum of the difference of the true positive detection and the false positive rate is taken overall possible  $\mathcal{A}$. This corresponds to the total variation (TV) distance between $p_t(\bx)d\bx$ and $p'_t(\bx)d\bx$.
Then from the Pinsker-Csiz\'{a}r-Kullback inequality~\cite{bakry2014analysis}[Equation (5.2.2)], we have
\begin{align}
\textnormal{Adv}_t \leq \sqrt{\frac{KL(p_t||p'_t)}{2}},
\end{align}
so the bound in Theorem \ref{thm:main} directly applies. 

\subsection{$(\epsilon,\delta)$-differential privacy}
Under certain assumptions, the KL-divergence can be translated into the standard $(\epsilon,\delta)$-DP bound. The standard $(\epsilon,\delta)$-DP \cite{dwork2014algorithmic} can be rewritten using the following result, see e.g. Lemma 2.2 in \cite{hall2013differential}, Theorem 5 in \cite{minami2016differential} or Theorem 2 part 2 in \cite{abadi2016deep}. 
\begin{lemma}[From concentration inequality to $(\epsilon,\delta)$-differential privacy]\label{lem:conc_to_dp}
Consider a measure $\nu_t(d\bx)=p_t(\bx)d\bx$. Then, $(\epsilon,\delta)$-differential privacy is satisfied if $\nu_t\left(\ln\frac {p_t(\bx)}{p'_t(\bx)} \geq \epsilon\right)\leq \delta,$
for \emph{every possible} adjacent pair of datasets $D,D'$. 
\end{lemma}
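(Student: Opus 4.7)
The plan is to unpack the definition of $(\epsilon,\delta)$-DP and show that the hypothesis directly gives the two-part decomposition needed. Recall that $(\epsilon,\delta)$-differential privacy amounts to the assertion that for every measurable set $A\subset\mathbb{R}^d$ and every adjacent pair $D,D'$,
\begin{equation}
\mathbb{P}(\bx_t\in A\mid D)\leq e^{\epsilon}\,\mathbb{P}(\bx'_t\in A\mid D')+\delta,
\end{equation}
i.e., $\nu_t(A)\leq e^{\epsilon}\nu'_t(A)+\delta$ where $\nu'_t(d\bx)=p'_t(\bx)d\bx$. So I only need to produce this inequality from the tail bound.

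First I would introduce the ``bad'' event
\begin{equation}
B:=\left\{\bx\in\mathbb{R}^d:\ln\frac{p_t(\bx)}{p'_t(\bx)}\geq \epsilon\right\},
\end{equation}
for which the hypothesis gives $\nu_t(B)\leq\delta$. Then, for an arbitrary measurable $A$, I would split
\begin{equation}
\nu_t(A)=\nu_t(A\cap B)+\nu_t(A\cap B^{c})\leq \delta+\int_{A\cap B^{c}} p_t(\bx)\,d\bx.
\end{equation}
On $B^{c}$ we have $p_t(\bx)\leq e^{\epsilon} p'_t(\bx)$ pointwise, so
\begin{equation}
\int_{A\cap B^{c}} p_t(\bx)\,d\bx\leq e^{\epsilon}\int_{A\cap B^{c}} p'_t(\bx)\,d\bx\leq e^{\epsilon}\,\nu'_t(A),
\end{equation}
which combined with the previous line yields $\nu_t(A)\leq e^{\epsilon}\nu'_t(A)+\delta$. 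Since $A$ was arbitrary and the hypothesis is assumed for every adjacent pair $D,D'$, this is exactly $(\epsilon,\delta)$-DP.

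There is essentially no obstacle here: the argument is the classical ``privacy loss random variable'' decomposition, and the only thing one must be slightly careful about is handling points where $p'_t(\bx)=0$. This is inconsequential in our setting since both $p_t$ and $p'_t$ are strictly positive Fokker--Planck densities on $\mathbb{R}^d$ under the dissipativity and non-degeneracy assumptions of Section~\ref{sec:theory}, so $\ln(p_t/p'_t)$ is well defined $\nu_t$-a.e.\ and the pointwise inequality on $B^{c}$ is unambiguous.
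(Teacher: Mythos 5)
Your proof is correct and is exactly the classical decomposition over the ``bad'' event $B=\{\ln(p_t/p'_t)\geq\epsilon\}$ that the paper relies on: the paper gives no proof of its own for this lemma and instead cites Lemma 2.2 of \cite{hall2013differential}, Theorem 5 of \cite{minami2016differential}, and Theorem 2 of \cite{abadi2016deep}, all of which use this same argument. Your closing remarks on the symmetry of adjacency and on the positivity of the Fokker--Planck densities (so that the privacy loss is well defined) are the right points to flag and need no further elaboration.
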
 
Combining the above Lemma with Corollary \ref{cor:conc_of_measure}, $(\epsilon,\delta)$-DP for \emph{fixed} datasets $D,D'$ holds if, 
\begin{align} \label{eq:eps_delta_kl}
    &r_{D,D'}+KL(p_t||p_t')\leq \epsilon_{D,D'}, \\
    &\exp\left(-\frac{ r_{D,D'}^{2}}{C_t \left\Vert F\right\Vert _{Lip}^{2}}\right)\leq \delta_{D,D'},
\end{align}
and then one can find the worst-case constants ($\epsilon,\delta$) for all datasets $D,D'$. 

\section{Anisotropic noise for a better privacy-accuracy trade-off} Suppose we are interested to estimate $\bx_*$, the minimizer of $f(\bx,D)$ and additive noise is added  in \eqref{eq:SGD_dynamics} for increasing privacy, so both SDEs $\bx_t$,$\bx'_t$ have the same diffusion matrix $\Sigma=\Sigma'$. Then
\begin{align}\label{eq:bound_intro}
   &KL(p_t||p'_t) \\
   &\leq ||\Sigma^{-1}||_2\int_0^t \int p_t(\bx) \norm{(\nabla f'(\bx)-\nabla f(\bx))}_2^2 d\bx ds. 
\end{align}
where $||\Sigma^{-1}||_2=\sigma _{\max }(\Sigma^{-1})$ represents the largest singular value of matrix $\Sigma^{-1}$, i.e. the smallest eigenvalue of $\Sigma$. This is consistent with the so called vectorized Gaussian Mechanism (GM) (see e.g. \cite{chanyaswad2018mvg} or Proposition 3 in \cite{hall2013differential}). GM adds Gaussian noise $\eta \sim \mathcal{N}(0,S)$ 
to 
an estimated parameter vector $\bx$ computed over dataset $D$. 
The advantages of adding anistropic Gaussian 
noise 
has been demonstrated in \cite{chanyaswad2018mvg}, who also shows that the privacy guarantee depends only on the singular values of $S$ similar to what we see here. 

The different choices for the covariance matrix can yield the same privacy guarantee while potentially resulting in better accuracy.
To show this we present the following (well-known) convergence result:
\begin{lemma}\label{lem:convergence}
Assume $f$ is a $\kappa$-strongly convex function. Consider the dynamics in \eqref{eq:SGD_dynamics}. It then holds, 
\begin{align}
    \frac{1}{2}\mathbb{E}[|\bx_t-\bx_*|_2^2] 
    &\leq e^{-2\kappa t} + \frac{\textnormal{Tr}(\Sigma)}{4\kappa}. 
\end{align}
\end{lemma}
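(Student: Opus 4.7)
The plan is to apply It\^{o}'s formula to the Lyapunov function $V(\bx) = \tfrac{1}{2}\|\bx-\bx_*\|_2^2$, take expectations to turn the SDE into an ODE for $\mathbb{E}[V(\bx_t)]$, use strong convexity to obtain a linear differential inequality, and close with Gr\"onwall.

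First, since $V$ is $C^2$ with $\nabla V(\bx) = \bx-\bx_*$ and $HV(\bx) = I$, It\^{o}'s formula applied to \eqref{eq:SGD_dynamics} (reading $\Sigma$ as the covariance and $\Sigma^{1/2}$ as the diffusion coefficient, consistent with Section~\ref{sec:theory}) yields
\begin{align}
dV(\bx_t) = \langle \bx_t-\bx_*,\, -\nabla f(\bx_t)\rangle\, dt + \langle \bx_t-\bx_*,\, \Sigma^{1/2}(\bx_t)\, dW_t\rangle + \tfrac{1}{2}\mathrm{Tr}(\Sigma(\bx_t))\, dt.
\end{align}
Under the standing dissipativity and regularity conditions on $\mathbf{b}=-\nabla f$, the solution $\bx_t$ has finite second moments uniformly in $t$ on compacts, which makes the stochastic integral a true martingale so that its expectation vanishes. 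Interchanging $\tfrac{d}{dt}$ with $\mathbb{E}$ then gives
\begin{align}
\frac{d}{dt}\mathbb{E}[V(\bx_t)] = -\mathbb{E}\bigl[\langle \bx_t-\bx_*,\nabla f(\bx_t)\rangle\bigr] + \tfrac{1}{2}\mathrm{Tr}(\Sigma).
\end{align}

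Next I would use $\kappa$-strong convexity of $f$, together with $\nabla f(\bx_*)=0$, to bound
\begin{align}
\langle \bx-\bx_*,\nabla f(\bx)\rangle = \langle \bx-\bx_*, \nabla f(\bx)-\nabla f(\bx_*)\rangle \geq \kappa\|\bx-\bx_*\|_2^2 = 2\kappa\, V(\bx),
\end{align}
which, substituted above, gives the linear differential inequality
\begin{align}
\frac{d}{dt}\mathbb{E}[V(\bx_t)] \leq -2\kappa\, \mathbb{E}[V(\bx_t)] + \tfrac{1}{2}\mathrm{Tr}(\Sigma).
\end{align}

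Finally, Gr\"onwall's (or the integrating-factor) inequality applied to this ODE gives
\begin{align}
\mathbb{E}[V(\bx_t)] \leq e^{-2\kappa t}\,\mathbb{E}[V(\bx_0)] + \frac{\mathrm{Tr}(\Sigma)}{4\kappa}\bigl(1-e^{-2\kappa t}\bigr),
\end{align}
from which the stated bound follows (with the convention that the initial variance factor has been absorbed into the $e^{-2\kappa t}$ term, i.e.\ $\tfrac{1}{2}\mathbb{E}[\|\bx_0-\bx_*\|_2^2]\leq 1$, otherwise the prefactor $\tfrac{1}{2}\mathbb{E}[\|\bx_0-\bx_*\|_2^2]$ should multiply $e^{-2\kappa t}$). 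There is no real obstacle in this argument; the only delicate point is the martingale property of the stochastic integral, which is standard under the dissipativity assumption already imposed in Section~\ref{sec:theory}. If $\Sigma$ depends on $\bx$, the argument is identical with $\mathrm{Tr}(\Sigma)$ replaced by $\sup_\bx \mathrm{Tr}(\Sigma(\bx))$ in the final step.
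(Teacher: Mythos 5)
Your proposal is correct and follows essentially the same route as the paper's own proof (Lemma~\ref{lem:convergence_app} in the appendix): It\^{o}'s formula applied to $V(\bx)=\tfrac{1}{2}\|\bx-\bx_*\|_2^2$, strong convexity to get the $-2\kappa V$ drift term, the martingale property of the stochastic integral, and Gr\"onwall. Your remark that the $e^{-2\kappa t}$ prefactor should really carry $\tfrac{1}{2}\mathbb{E}[\|\bx_0-\bx_*\|_2^2]$ is a fair observation about the statement as written, not a gap in your argument.
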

The proof is contained in Appendix \ref{app:convergence}. Lemma \ref{lem:convergence} shows that at large $t$ or close to convergence the error is  approximately given by $\mathbb{E}[||\bx_t-\bx_*||_2^2] \lesssim \textnormal{Tr}(\Sigma)$. The optimal choice of $\Sigma$ could minimize privacy risk for a given accuracy or vice versa. We will formulate such an optimisation problem in Section \ref{sec:impact_aniso}. 

In the context of membership inference attacks, the result in Theorem Theorem \ref{thm:main} and Lemma \ref{lem:convergence} shows that anisotropic noise may decrease the attack success rate while maintaining better model performance. Similarly, one may obtain better differential privacy guarantees at a smaller cost to the accuracy. 


\section{Discussion on what influences the relative entropy}
We end the theoretical part of the paper with a discussion on how the relative entropy, and hence privacy risks, are affected by flatness of the loss landscape and the batch size used in SGD. 

\noindent\textbf{Flatness of the loss landscape}
Suppose now $f(x,D)$ and $f'(x,D')$ differ in one datapoint and 
let $D=\{d_1,...,d_j,...d_N\}$ and $D'=\{d_1,...,d_j',...d_N\}$
where $d_j'\neq d_j$. Then
by the mean-value theorem the term in \eqref{eq:bound_intro} can be rewritten as, 
\begin{align}
    &\norm{\nabla_{\bx} f(\bx,D)-\nabla_{\bx} f(\bx,D')}_2 \\
    &\leq ||\nabla_{d} \nabla_{\bx} f(\bx,\mathbf{z})||\cdot ||d_j-d_k||,
\end{align}
where $\mathbf{z}\in [d_j,d_k]$. The sensitivity to parameter changes and datapoint changes influences the relative entropy, with less sensitivity in either decreasing it. Heuristically, the flatness as measured through the trace of the Hessian \cite{hochreiter97flat} has been used for estimating the generalisation ability of a neural network. Note the  connection to flatness as measured by the parameter flatness: one can interpret the gradient with respect to the input data $d$ as a gradient with respect to the parameters $\bx$ (see \eg Section 3.3 in \cite{borovykh19}) to obtain a Hessian-like term which shows that the more flat the loss landscape, the smaller the KL divergence between the laws of the two processes and hence the smaller the privacy risk.

\noindent\textbf{Batch size in SGD}
Consider the case where $f(x)=\sum_{l=1}^{N} f_l(x,d_l) $ and $N$ is large.
In this regime SGD uses a minibatch $\mathcal{B}$ of size $n$ containing random samples of $\{1,2,\ldots,N\}$ and implements
\begin{align}
    x_{k+1} = x_k - \frac{N}{n}\sum_{m\in \mathcal{B}} \nabla f_m(x_k),
\end{align}
The Langevin dynamics \eqref{eq:SGD_dynamics} are used as a continuous time limiting model and that uses unbiasedness and standard variance calculations from simple random sampling to get an anisotropic matrix 
$
\Sigma(x)=\alpha_{n,N}\left(\sum_{l=1}^{N}\nabla f_l(x,d_l)\nabla f_l(x,d_l)^T-\nabla f(x)\nabla f(x)^T\right)$
with 
$\alpha_{n,N}=\frac{N^2}{n}(1-\frac{1}{N})$ when sampling with replacement and $\alpha_{n,N}=\frac{N^2}{n}(1-\frac{n}{N})$ for without replacement; 
see \cite{chaudhari18} for more details.
Our analysis is applicable and it is clear that in both cases a larger batch size \emph{increases} the KL divergence and hence the privacy risk. 

\section{Numerical results}
We conclude this work with several toy numerical examples that show the impact of anisotropic noise on the relative entropy. 
\subsection{The effect of anisotropic noise}\label{sec:impact_aniso}
\subsubsection{Optimising $\Sigma$}
For ease of notation we define $\mathbf{S}^g:=|\nabla f'(\bx)-\nabla f(\bx)|$ where the absolute value is taken element-wise. Then the term in \eqref{eq:bound_intro} can be upper-bounded by $||\Sigma^{-1/2}\mathbf{S}^g||_2^2$. To show the impact of anistropic noise, we first plot the covariance-dependent KL-divergence term $||\Sigma^{-1/2}\mathbf{S}^g||_2^2$ and the error bound given by $\textnormal{Tr}(\Sigma)$ for different $x$ and $y$ in $\Sigma^{1/2}=[[x,0];[0,y]]$. We consider $\mathbf{S}^g=[10,10]^T$ in the top of Figure \ref{fig:kl_general_sf1} and $\mathbf{S}^g=[10,1]^T$ in the bottom. One can conclude that in the case of a $\mathbf{S}^g$ where the dimensions have different magnitudes of gradient variation, anisotropic noise can preserve more accuracy at equal levels of privacy compared to isotropic noise. 
Consider for example the setting with $S_f=[10,10]$: adding $(4,3)$ noise (i.e. $x=3$, $y=3$) leads to the same accuracy 
and privacy risk as adding $(4,4)$. In the setting with $S_f=[10,1]$, one can obtain the same privacy risk at $(4,1)$, which has an almost zero accuracy loss than $(4,3)$. 

\begin{figure}[t]
    \centering
    \includegraphics[width=0.2\textwidth]{{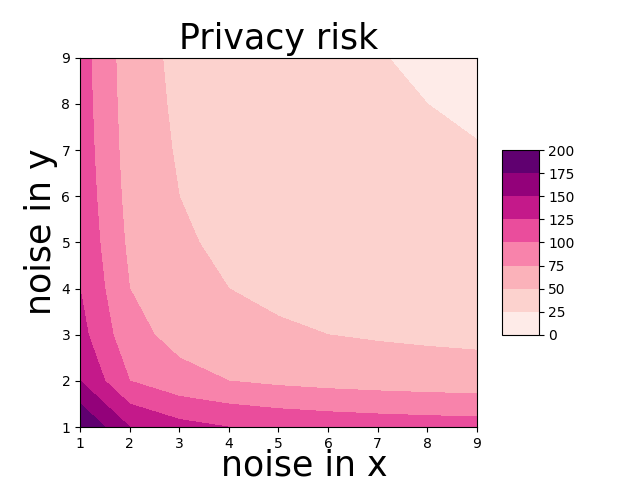}}
    \includegraphics[width=0.2\textwidth]{{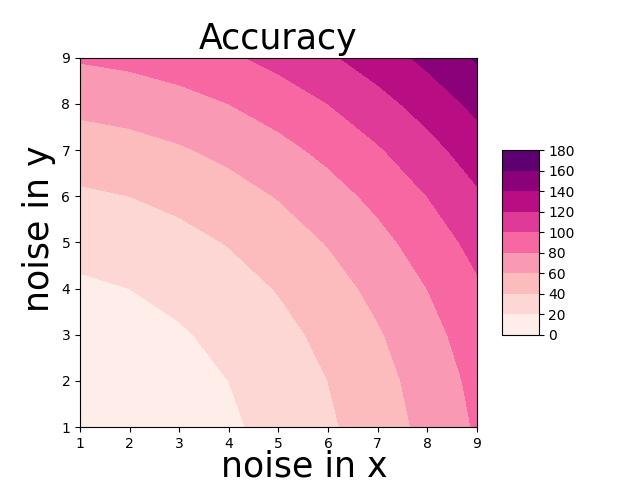}}\\
    \includegraphics[width=0.2\textwidth]{{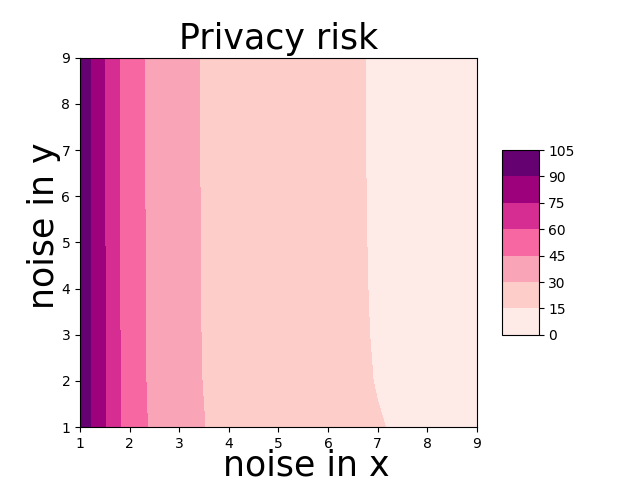}}
    \includegraphics[width=0.2\textwidth]{{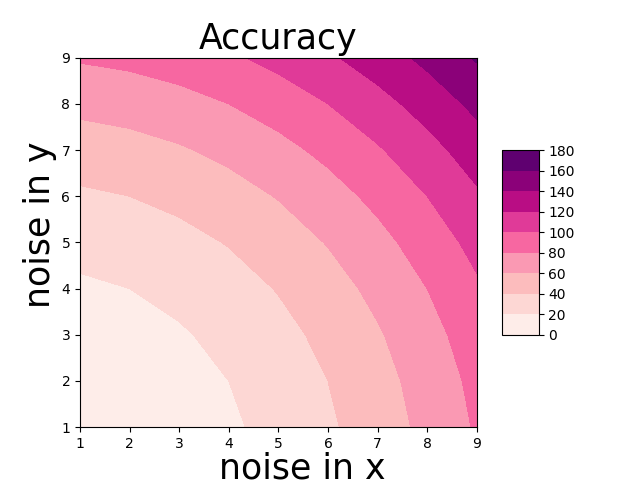}}
    \caption{(T) Setting with $\mathbf{S}^g=[10,10]^T$, (B) setting with $\mathbf{S}^g=[10,1]^T$. The accuracy is represented by the $L_2$ error, so that a low accuracy is preferable.}
    \label{fig:kl_general_sf1}
\end{figure}

We consider now the following optimization problem:
\begin{align}
& \min_{\Sigma} ||\Sigma^{-1/2}\mathbf{S}^g||_2^2, \\
& \textnormal{subject to}\quad 
 \mathbb{E}[||\bx-x^*||_2^2] = \textnormal{Tr}(\Sigma)=\zeta.
 \end{align}
If we restrict  $\Sigma^{1/2}$ to be diagonal with elements $x,y$ then we can use a generic optimiser to solve for the optimal covariance that minimises the relative entropy bound while attaining an accuracy loss of $\zeta$. 
Figure \ref{fig:opt_priv_acc} shows the optimal values for $x$ and $y$ in $\Sigma$ for different levels of $\zeta$ of the accuracy loss. In the case of $\mathbf{S}^g$ having different magnitudes in the two directions (the left-hand figure) anisotropic noise is more beneficial (since $x\neq y$), while in the case that $\mathbf{S}^g$ is the same in each dimensions (the right-hand figure) the optimal noise level is the same in both $x$ and $y$. 

\begin{figure}[h]
    \centering
    \includegraphics[width=0.2\textwidth]{{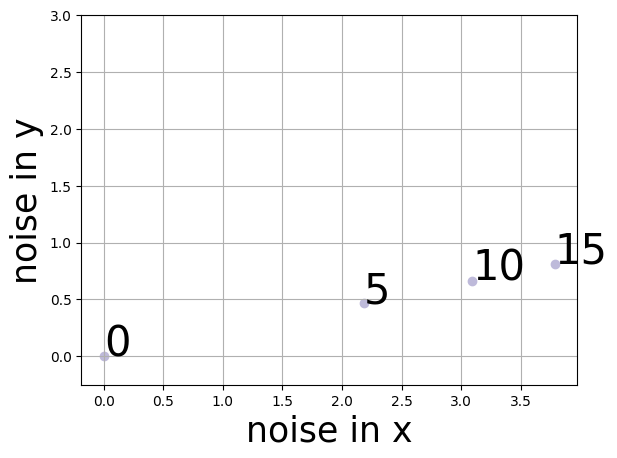}}
    \includegraphics[width=0.2\textwidth]{{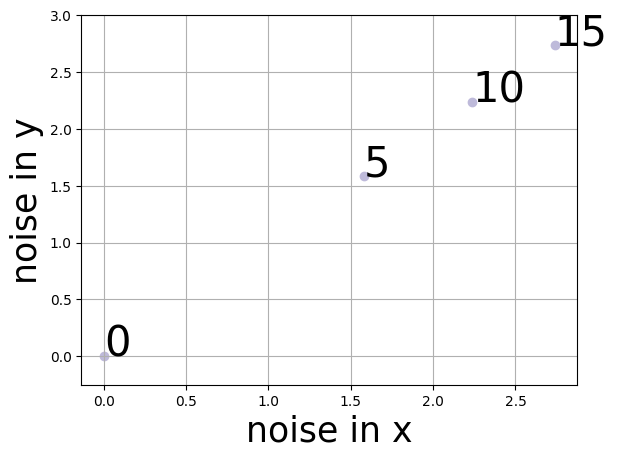}}
    \caption{Setting with $\mathbf{S}^g=[10,1]^T$ (L) and $\mathbf{S}^g=[10,10]^T$ (R). The numbers near the scatters show the $\zeta$ (loss in accuracy) value.}
    \label{fig:opt_priv_acc}
\end{figure}

\subsubsection{The quadratic Objective function} Consider a quadratic objective, $x^\top \mathbf{B}x$. It is straightforward to check that the law of the processes $x_t$ and $x'_t$ satisfy a LSI; see Appendix  \ref{sec:quad_appx} for relevant calculations for $p_t,p'_t$. In the next numerical experiment we measure the impact of the condition number of $B$ to our DP bounds. Consider again the KL-divergence and the error for different $x$ and $y$ in $\Sigma^{1/2}=[[x,0];[0,y]]$. For the quadratic the KL-divergence can be computed explicitly, see Lemma \ref{lem:kl_gaussian}. 
The accuracy is measured using the result in Lemma \ref{lem:accuracy_quad} and here we compute it at $t=100$ to ensure we are close to  the optimum. From the DP bounds one would expect that it is crucial to add more noise into the sharp directions of the difference in the objective functions to decrease the privacy risk. Using isotropic noise in all directions, the accuracy will get decreased more than necessary. Indeed, as Figure \ref{fig:quad_10_100} confirm, anisotropic noise can better maintain the privacy-accuracy tradeoff. The difference between isotropic and anisotropic noise is even more prominent for higher condition numbers. 


\begin{figure}[t]
    \centering
    \includegraphics[width=0.2\textwidth]{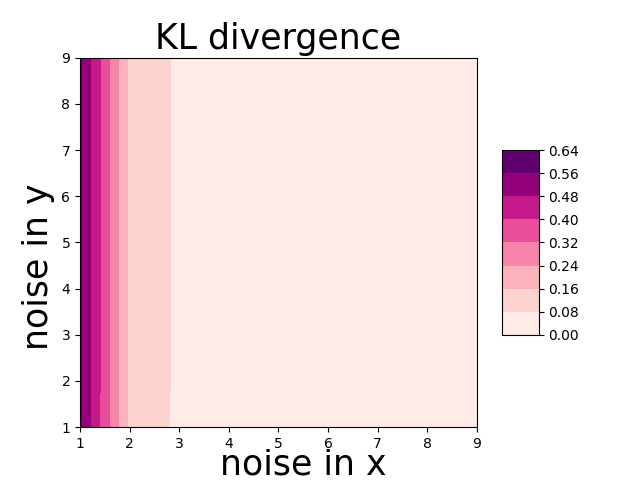}
    \includegraphics[width=0.2\textwidth]{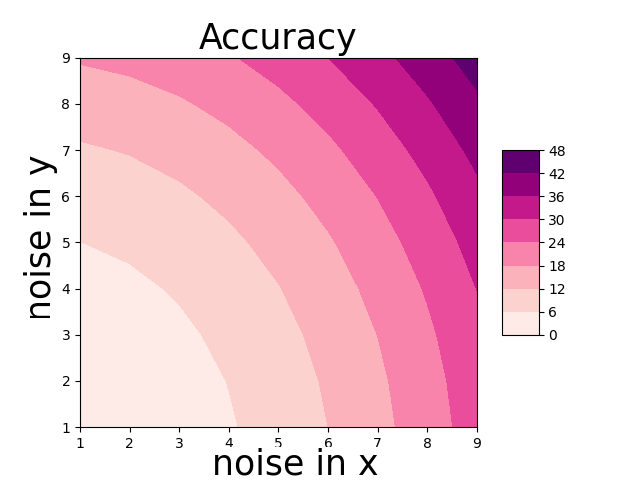}\\
    \includegraphics[width=0.2\textwidth]{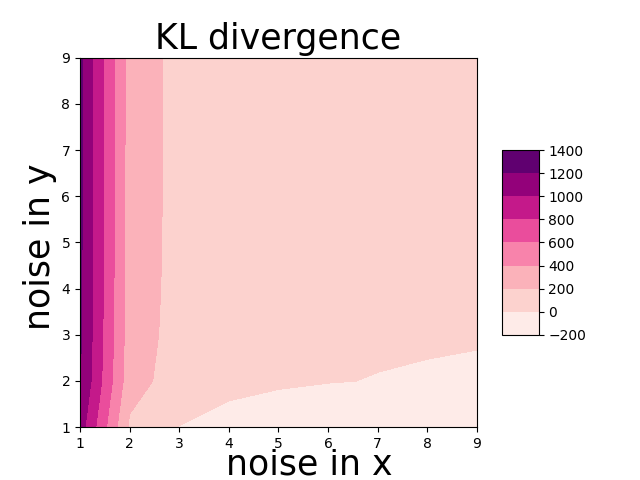}
    \includegraphics[width=0.2\textwidth]{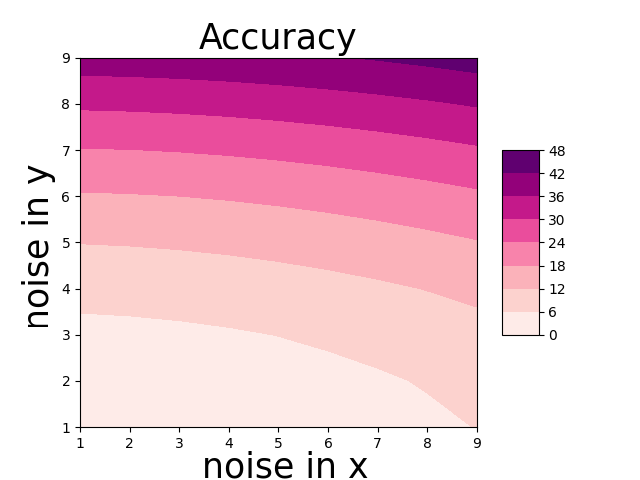}
    \caption{(T) The quadratic objective function with condition number 10; (B) the quadratic objective function with condition number 100.}\label{fig:quad_10_100}
\end{figure}


\subsection{$(\epsilon,\delta)$-DP for neural networks under anisotropic noise}\label{sec:cov_imp_num}
We consider the impact of adding anisotropic noise in the training of neural networks with different architectures. The models are trained for 10000 iterations to ensure convergence in all settings. In order to validate our theoretical conclusions numerically, we introduce a modified version of the methodology from \cite{jagielski2020auditing} to statistically estimate the differential privacy parameters of the neural network model. In particular, we use the numerical approach from Algorithm \ref{alg:1} in Appendix \ref{sec:alg_epsilon_delta} to estimate the $\delta$ for a fixed $\epsilon$. We compare the following settings:
\begin{enumerate}
    \item Gradient descent with isotropic noise per layer with dynamics $\bx^l_{t+1} = \bx^l_{t} - \eta \nabla_{\bx^l_{t}} f(\bx_{t}) + \mathcal{N}(0,\Sigma_t^l)$ where $\Sigma_t^l = \sigma^2 K_t I_{d_l\times d_l}$ where we use $d_l$ to denote the number of parameters in neural network layer $l$ and $K_t$ is set to the maximum absolute value of the gradients per iteration \emph{per layer}. Similarly for $f'$. Each parameter thus has noise added based on the largest magnitude of the nodes in its layer. 
    \item Anisotropic noise: use $\bx_{t+1}=\bx_{t}-\eta\nabla f(\bx_{t}) + \mathcal{N}(0,\Sigma_t)$ where $\Sigma_t$ is a diagonal matrix with elements $\Sigma_{(ii),t} = \sigma^2 |\nabla_{\bx^i_t}f(\bx)|$ and similarly for $f'$. Each parameter thus has noise added based on its own gradient magnitude. 
\end{enumerate}
We use a learning rate of $0.1$, we set $\sigma^2=0.01$ for FashionMNIST and $\sigma^2=0.001$ for CIFAR10 and we use a batch size of 100 for FashionMNIST and 10 for CIFAR10. The results are presented in Table \ref{tab:tab2}. For FashionMNIST we observe almost similar $\delta$-s, while the anisotropic noise is able to maintain a lower loss value at a similar $\delta$. For isotropic noise on CIFAR10, the convolutional network fails to converge even with a small $\sigma^2$, highlighting how anisotropic noise allows for easier tuning while preserving better privacy properties.

\begin{table}[h]
    \centering
    \begin{tabular}{p{3cm}||p{2cm}|p{2cm}}
    \multicolumn{3}{p{7cm}}{\textbf{Setting 1}: 1000 randomly chosen samples of FashionMNIST and a fully-connected neural network with one hidden layer of size 10.}\\\hline
    & $\delta$ for $\epsilon=0.1$ & Worst loss \\\hline\hline
    1: Isotropic per layer &  0.0015 & 0.35 \\
    2: Anisotropic & 0.0012 & 0.14 \\\hline
    \end{tabular}
    \begin{tabular}{p{3cm}||p{2cm}|p{2cm}}
    \multicolumn{3}{p{7cm}}{\textbf{Setting 2}: 10000 randomly chosen samples of FashionMNIST and a fully-connected neural network with one hidden layer of size 100.}\\\hline
    & $\delta$ for $\epsilon=0.1$ & Worst loss \\\hline\hline
    1: Isotropic per layer & 0.061  &  0.37\\
    2: Anisotropic & 0.062  & 0.26\\\hline
    \end{tabular}
    \begin{tabular}{p{3cm}||p{2cm}|p{2cm}}
    \multicolumn{3}{p{7cm}}{\textbf{Setting 3}: 1000 randomly chosen samples of CIFAR10 and a convolutional neural network\tablefootnote{The first convolutional layer is of size (3, 6, 5), followed by a (2,2) pooling, followed by the second convolutional layer of size (6,16,5), followed by a fully connected layer of size (16*5*5,120), one of (120,84) and a final layer of (84,10).}.}\\\hline
    & $\delta$ for $\epsilon=0.1$ & Worst loss \\\hline\hline
    1: Isotropic per layer & 0.30 & 3.1 \\
    2: Anisotropic & 0.00060 & 0.0062\\
    \end{tabular}
    \caption{The numerically estimated $\delta$ and the highest loss observed over all iterations for different noise covariance matrices for different datasets and architectures.}
    \label{tab:tab2}
\end{table}

\subsection{Intrinsic privacy of SGD and impact of flatness}\label{sec:num_intrinsic}
In this section we numerically estimate the privacy risk of SGD with potentially degenerate noise structures to understand better the intrinsic privacy effect of SGD and to validate the theoretically obtained impact of flatness. 
We keep the same setup as in Section \ref{sec:cov_imp_num}. The models are trained for 10000 iterations to ensure convergence for all batch sizes and a learning rate of 0.1 is used. From Table \ref{tab:tab1} we see that indeed smaller batch sizes may result in lower privacy risks.  It has been argued in \cite{dai2020large} that using smaller batch sizes facilitates convergence to flatter minima, so we provide numerical evidence to the discussion in Section \ref{sec:insights} that suggests flatter minima can lead to lower privacy risk.

\begin{table}
\centering
\begin{tabular}{l|l||l}
$\epsilon$ & Batch size & Lower bound on $\delta$\\\hline\hline
0.1 & 500 & 0.07\\
0.1 & 1000 & 0.11\\\hline
0.01 & 500 & 0.20\\
0.01 & 1000 & 0.24\\\hline
0.001 & 500 & 0.30\\
0.001 & 1000 & 0.35
\end{tabular}
\caption{Numerically estimated $\delta$ for different batch sizes for 1000 randomly chosen samples of FashionMNIST and a neural network with one hidden layer of size 10.}
    \label{tab:tab1}
\end{table}

\subsection{Membership inference attacks under anisotropic noise}
We conclude with a toy example related to membership inference attacks. As before, we take two datasets $D$ and $D'$ which differ in a single datapoint $d'$, i.e. $D'=D\backslash\{d'\}$. The success of a membership inference attack relies on the ability to distinguish between distributions defined by models trained over the two datasets. Based on \cite{ye2021enhanced} one can consider the distributions of the losses for the predictions on $d'$ for models trained on $D$ and $D'$. In Figure \ref{fig:memb_dist} we plot the histogram  of the losses (over 50 runs) on $d'$ when training with isotropic and anisotropic noise as described in Section \ref{sec:cov_imp_num} over 1000 randomly chosen FashionMNIST samples with a one-layer fully-connected network with 30 hidden nodes trained for 3000 iterations. We observe that the difference in the means of losses when trained on $D$ and $D'$ with isotropic noise is similar to the difference in the anisotropic case. This means that the models under both isotropic and anisotropic noise are equally distinguishable. However, the anisotropic case has a lower worst loss (measured as the highest loss over the train data over all runs) and hence facilitates better privacy-accuracy trade-offs.

\begin{figure}[h]
    \centering
    \includegraphics[width=0.2\textwidth]{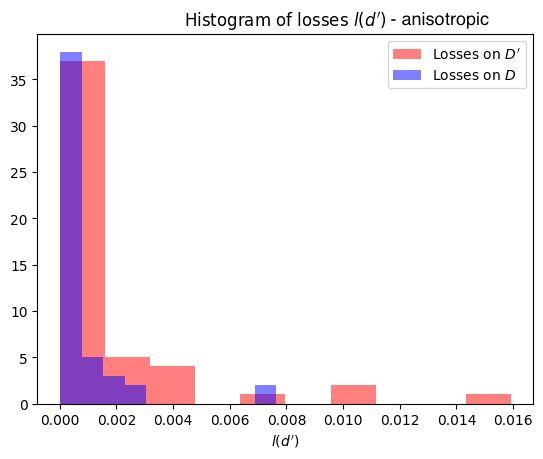}
    \includegraphics[width=0.2\textwidth]{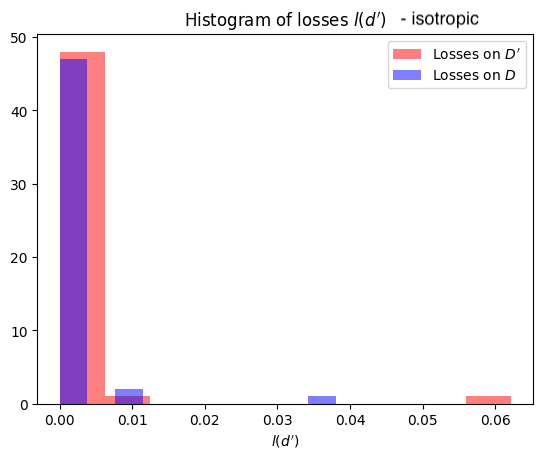}
    \caption{Cross-entropy loss on $d'$ when trained on $D$ (includes $d'$) and $D'=D\backslash\{d'\}$; (L) the setting with anisotropic noise where distance between the means of the losses when trained on $D$ and $D'$ is 0.0008 and the a worst loss of 0.43; (R) the setting with isotropic noise with distance between the means 0.0004 and a worst loss of 0.52. Anisotropic noise can lead to similar distinguishability between distributions at higher accuracy.}
    \label{fig:memb_dist}
\end{figure}

\section{Conclusion}
In this work we presented a discussion on how a very general relative entropy bound affects privacy risks for anisotropic Langevin dynamics. 
Our numerical results also considered how to choose $\Sigma$ to optimise the accuracy-privacy tradeoff and showed how anisotropic noise can lead to better accuracy at equal privacy cost. 
We speculate that even higher accuracy at equal privacy cost can be achieved by carefully tuning the magnitude of the noise, e.g. finding the directions in parameter space which are most relevant to `hiding' changes in the train dataset. 
\bibliography{main}
\bibliographystyle{icml2023}

\appendix
\onecolumn

\section{Proofs and auxiliary results}
\subsection{Proof of Proposition \ref{lem:KLbound}}\label{app:proof_kl_bound_langevin}
For convenience we repeat the proposition statement here.

\begin{proposition}
Let Assumption \ref{ass:1} hold. Then for all $t\geq0$, 
\[ KL(p_t||p'_t)   \leq \frac{8L'^2}{\sigma^2} \left\{2(\frac{L^2}{L'^2}+2)(\frac{\sigma^2}{2\kappa} +1)+ ||\bx_*-\bx'_*||^2\right\}\frac{(4+C_0{\sigma'^2\kappa'})}{\sigma^2{\sigma'^2\kappa'}} \]
\end{proposition}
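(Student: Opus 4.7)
The strategy is to sharpen the Young step in the proof of Theorem \ref{thm:main} so as to retain the relative Fisher information on the right hand side, use the LSI for $p'_t$ (justified by Remark \ref{rem:lsi} under strong convexity of $f'$) to rewrite this term as a negative multiple of $\psi(t):=KL(p_t||p'_t)$, and close the resulting linear ODE inequality by Gr\"onwall. In the isotropic Langevin setting $\Sigma=\sigma^2 I$ and $\Sigma'=\sigma'^2 I$ are constant in $\bx$, so $h=-\nabla f$, $h'=-\nabla f'$ and
\[ \Phi=(\sigma'^2-\sigma^2)\nabla\log p'_t+(\nabla f'-\nabla f). \]

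\textbf{Sharpening and closing.} Revisit the proof of Theorem \ref{thm:main} and replace the equal-weight Young step \eqref{eq:young_bound_proof} by $|a\cdot b|\leq\tfrac{\alpha}{2}\norm{a}^2+\tfrac{1}{2\alpha}\norm{b}^2$ with $\alpha\in(0,1)$, applied to $a=\Sigma^{1/2}\nabla\log(p_t/p'_t)$ and $b=\Sigma^{-1/2}\Phi$. The cancellation with term $[D]$ is now only partial and, writing $G(t):=\int p_t\norm{\Phi}^2 d\bx$ and $I(p_t||p'_t)$ for the relative Fisher information, one is left with $\psi'(t)\leq -\tfrac{(1-\alpha)\sigma^2}{2}I(p_t||p'_t)+\tfrac{1}{2\alpha\sigma^2}G(t)$. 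Strong convexity of $f'$ together with Remark \ref{rem:lsi} gives a $C'_t$-LSI for $p'_t(\bx)d\bx$ with $C'_t\leq\bar C:=(4+C_0\sigma'^2\kappa')/(\sigma'^2\kappa')$ (from $\rho'=\sigma'^2\kappa'/2$), whence via \eqref{eq:lsi_property} $I(p_t||p'_t)\geq\tfrac{4}{\bar C}\psi(t)$. The resulting linear inequality $\psi'+c_1\psi\leq c_2 G$, integrated from $\psi(0)=0$, yields $\psi(t)\leq \tfrac{\bar C}{4\alpha(1-\alpha)\sigma^4}\sup_{s\leq t}G(s)$, which already reproduces the prefactor $(4+C_0\sigma'^2\kappa')/(\sigma^4\sigma'^2\kappa')$ once $\alpha$ is tuned so the universal constant equals $8$.

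\textbf{Bounding the forcing $G(s)$.} For the clean case $\sigma=\sigma'$ one has $\Phi=\nabla f'-\nabla f$. Lipschitz continuity of both gradients (Assumption \ref{ass:1}), used around each function's own minimiser, together with $\norm{\bx-\bx'_*}^2\leq 2\norm{\bx-\bx_*}^2+2\norm{\bx_*-\bx'_*}^2$, produces $\norm{\nabla f-\nabla f'}^2\leq 2L'^2(L^2/L'^2+2)\norm{\bx-\bx_*}^2+4L'^2\norm{\bx_*-\bx'_*}^2$. A standard It\^o/strong-convexity computation for the first SDE gives the time-uniform estimate $\mathbb{E}\norm{\bx_s-\bx_*}^2\lesssim \sigma^2/(2\kappa)+1$, so $\sup_s G(s)\lesssim L'^2\bigl[2(L^2/L'^2+2)(\sigma^2/(2\kappa)+1)+\norm{\bx_*-\bx'_*}^2\bigr]$, which is exactly the bracketed factor stated in the proposition.

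\textbf{Main obstacle.} The hard part is the general case $\sigma\neq\sigma'$, because $G(t)$ then inherits the extra contribution $(\sigma'^2-\sigma^2)^2\int p_t\norm{\nabla\log p'_t}^2 d\bx$. This is a cross integral pairing the density of one SDE against the score of the other and is not controlled by any standard functional inequality; absorbing it into a time-uniform bound requires classical Aronson/Sheu-type pointwise Gaussian heat-kernel estimates on $\norm{\nabla\log p'_t}$, and this is the step on which the time-uniformity in the general anisotropic/unequal-noise case ultimately rests.
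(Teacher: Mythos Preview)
Your approach is essentially identical to the paper's: the paper also re-weights the Young step in \eqref{eq:young_bound_proof} (with a fixed factor $\sqrt{2}$, i.e.\ your $\alpha=1/2$) so that only half of the relative Fisher information cancels against $[D]$, then invokes the $C'_t$-LSI for $p'_t$ from Remark~\ref{rem:lsi} to obtain a linear differential inequality for $KL(p_t\|p'_t)$, closes it by Gr\"onwall, and bounds the forcing via exactly the Lipschitz/triangle decomposition you wrote (their Lemma~\ref{lem:bound_nablas}) combined with the second-moment estimate of Lemma~\ref{lem:convergence_app}.

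Your ``Main obstacle'' paragraph is well placed: the paper's own proof simply substitutes $\Phi=\nabla f-\nabla f'$ after Gr\"onwall, which silently drops the $(\sigma'^2-\sigma^2)\nabla\log p'_t$ contribution to $\Phi$ and is therefore rigorous only when $\sigma=\sigma'$; the stated bound with both $\sigma$ and $\sigma'$ then follows because $\sigma'$ enters only through the LSI constant $C'_t$ of $p'_t$. So you have not missed a trick---the cross score term is not controlled in the paper either, and your identification of heat-kernel gradient bounds as the needed extra input for genuinely different diffusion coefficients matches the paper's own remark following Proposition~\ref{lem:KLbound}.
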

\begin{proof}
Following Remark \ref{rem:lsi} we know both $p_t(\bx)d\bx$ and $p'_t(\bx)d\bx$ obey LSIs with $$C_t = \frac{2}{\rho}\left(1 -e^{-\rho t} \right)+C_0 e^{- \rho t}$$ and 
\begin{align}
 C'_t = \frac{2}{\rho'}\left(1 -e^{-\rho' t} \right)+C_0 e^{- \rho' t}  \label{eq:C_t_formula}, 
\end{align}
where $\rho=\frac{\sigma^2\kappa}{2}$, $\rho'=\frac{\sigma'^2\kappa'}{2}$ and $C_0$ being the LSI constant of $\nu_0$, \cite{malrieu01}[pp. 112-113].  
We have
\begin{align}
    &\int p'_t \frac{p'_t}{p_t} \norm{\Sigma^{1/2} \nabla\frac{p_t}{p'_t}}^2d\bx
    = \sigma^2 \int p \norm{\nabla\log\frac{p_t}{p_t'}}^2 d\bx.
\end{align}
and following \eqref{eq:lsi_property} we get the following bound 
\begin{align}
   -\frac{1}{4}\sigma^2 \int p_t \norm{\nabla\log\frac{p_t}{p'_t}}^2 d\bx \leq -\frac{\sigma^2}{4 C'_t} KL(p_t||p'_t).\label{eq:KL_lsi_bound}
\end{align} 
Lets revisit \eqref{eq:young_bound_proof} in the proof of Theorem \ref{thm:main} and modify the bound as follows (we now include an additional $\sqrt{2}$ factor):
\begin{align}
    &\bigg|p_t' \frac{p_t'}{p_t} \bigg\langle \frac{p_t}{p_t'}\Phi,\nabla\frac{p_t}{p_t'}\bigg\rangle\bigg| \\
    &= \bigg| p_t' \frac{p_t'}{p_t} \bigg\langle \sqrt{2}\Sigma^{-1/2} \frac{p_t}{p_t'}\Phi,  \frac{\Sigma^{1/2}}{\sqrt{2}} \nabla\frac{p_t}{p_t'}\bigg\rangle \bigg| \\
    &= p_t  \norm{\Sigma^{-1/2} \Phi}^2 + \frac{1}{4} p_t' \frac{p_t'}{p_t} \norm{\Sigma^{1/2} \nabla\frac{p_t}{p_t'}}^2.
\end{align}
Then substituting in \eqref{eq:KLequality} we get 
\begin{align}
    \frac{d}{dt} KL(p_t||p'_t)\leq
    \int p_t  \norm{\Sigma^{-1/2} \Phi}^2 d\bx 
     - \frac{1}{4}\int p'_t \frac{p'_t}{p_t} \norm{\Sigma^{1/2} \nabla\frac{p_t}{p'_t}}^2d\bx.
\end{align} 
and using \eqref{eq:KL_lsi_bound} we get
\begin{align}
    \frac{d}{dt} KL(p_t||p'_t)\leq
    \int p_t  \norm{\Sigma^{-1/2} \Phi}^2 d\bx 
     -\frac{\sigma^2}{4 C'_t} KL(p_t||p'_t)
\end{align} 
and applying Gr\"onwall's Lemma and substituting $\Phi=\nabla f-\nabla f'$ gives
\begin{align}\label{eq:kl_simplified_langevin}
    KL(p_t||p'_t) \leq e^{r_t}KL(p_0||p'_0)+
    \frac{1}{\sigma^2} \int_0^t \int_{\mathbb{R}^d} ||\nabla f(\bx)-\nabla f'(\bx)||^2 p_s(\bx)  e^{r_t-r_s} d\bx ds, 
\end{align}
with $r_t=-\int_0^t\frac{\sigma^2}{4C'_u}du$. 
Recall $p_0=p_0'$ and from \eqref{eq:C_t_formula} we have  $C_0\leq C'_t\leq\frac{2}{\rho'}+C_0$ and hence $r_t\leq-\frac{\sigma^2{\rho'}}{4(2+C_0{\rho'})}t$ so
\begin{align}\label{eq:kl_simplified_langevin}
    KL(p_t||p'_t) \leq 
    \frac{1}{\sigma^2} \int_0^t \left[ \int_{\mathbb{R}^d} ||\nabla f(\bx)-\nabla f'(\bx)||^2 p_s(\bx)d\bx \right] e^{-\frac{\sigma^2{\rho'}}{4(2+C_0{\rho'})}(t-s)}  ds. 
\end{align}
Bounding the expectation using Lemma \ref{lem:bound_nablas} with ${M}=1$ and integrating w.r.t. time
we get
\begin{align}\label{eq:kl_simplified_langevin}
    KL(p_t||p'_t) &\leq 
    \frac{1}{\sigma^2}\left[2L'^2 \left\{2(\frac{L^2}{L'^2}+2)(\frac{\sigma^2}{2\kappa} +e^{-2\kappa t})+ ||\bx_*-\bx'_*||^2\right\}\right]\frac{4(2+C_0{\rho'})}{\sigma^2{\rho'}}(1- e^{-\frac{\sigma^2{\rho'}}{4(2+C_0{\rho'})}t}) \\
    &  \leq \frac{8L'^2}{\sigma^2} \left\{2(\frac{L^2}{L'^2}+2)(\frac{\sigma^2}{2\kappa} +1)+ ||\bx_*-\bx'_*||^2\right\}\frac{(4+C_0{\sigma'^2\kappa'})}{\sigma^2{\sigma'^2\kappa'}} ,
\end{align}
where in the last step we upper bounded the terms adding or subtracting  exponential  functions of time and substituted $\rho'=\frac{\sigma'^2\kappa'}{2}$ and the result follows.

 \end{proof}

\begin{remark}\label{rem:const_infty}
Note that at $t\rightarrow\infty$ from the second to last inequality we get a slightly sharper bound \[KL(p_t||p'_t)   \leq \frac{8L'^2}{\sigma^2}\left\{2(\frac{L^2}{L'^2}+2)(\frac{\sigma^2}{2\kappa})+ ||\bx_*-\bx'_*||^2\right\}\frac{(4+C_0{\sigma'^2\kappa'})}{\sigma^2{\sigma'^2\kappa'}} \].
\end{remark}

\subsection{Proof or Proposition \ref{lem:KLbound_simpler}}\label{app:gibbs}



For convenience we repeat again the proposition statement.

\begin{proposition}\label{prop:asym_general}
Let Assumption \ref{ass:1} holds. Consider
\begin{align}
    &p_\infty(\bx)=\frac{1}{Z}e^{-\frac{1}{2\sigma^2}f(\bx)}, \quad
    p_\infty'(\bx)=\frac{1}{Z'}e^{-\frac{1}{2\sigma^2}f'(\bx)},
\end{align}
with normalization constant $  Z = \int_{\mathbb{R}^d} e^{-\frac{1}{2\sigma^2}f(\bx)}d\bx,$ and similarly for $Z'$. 
Then,
\begin{align}
    KL(p_\infty||p_\infty') \leq  \frac{L'^2}{2\kappa'\sigma'^6} \left\{(\frac{\sigma'^2 L^2}{\sigma^2 L'^2}+2)\frac{\sigma^2}{2\kappa} + ||\bx_*-\bx'_*||^2\right\}
\end{align}
\end{proposition}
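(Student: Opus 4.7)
The plan is to apply the log-Sobolev inequality for the Gibbs measure $p'_\infty$ directly, bypassing Theorem \ref{thm:main} entirely (this is possible precisely because the stationary densities are known in closed form, as noted in the footnote attached to the proposition). Since $f'$ is $\kappa'$-strongly convex, the Gibbs potential $V' = f'/(2\sigma'^2)$ inherits strong convexity, so by Bakry--\'Emery (Remark \ref{rem:lsi}) the measure $p'_\infty$ satisfies a log-Sobolev inequality with some constant $C'_\infty$ of order $\sigma'^2/\kappa'$. Following the same trick as in \eqref{eq:lsi_property}, inserting $g^2 = p_\infty/p'_\infty$ into the LSI --- so that $\mathbb{E}_{p'_\infty}[g^2]=1$ and the left-hand entropy equals $KL(p_\infty \| p'_\infty)$ --- reduces the proof to controlling
\[
KL(p_\infty \| p'_\infty) \;\leq\; \frac{C'_\infty}{4}\int p_\infty \,\Bigl\| \nabla \log \tfrac{p_\infty}{p'_\infty}\Bigr\|^2 d\bx.
\]

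The second step is an explicit gradient computation: the normalising constants $Z,Z'$ vanish under $\nabla \log$, so
\[
\nabla \log \tfrac{p_\infty}{p'_\infty} \;=\; \tfrac{\nabla f'(\bx)}{2\sigma'^2} - \tfrac{\nabla f(\bx)}{2\sigma^2} \;=\; \tfrac{1}{2\sigma'^2}\bigl(\nabla f'(\bx) - \tfrac{\sigma'^2}{\sigma^2}\nabla f(\bx)\bigr).
\]
I would then use the critical point identities $\nabla f(\bx_*) = 0$ and $\nabla f'(\bx'_*)=0$ to split the bracket as $[\nabla f'(\bx) - \nabla f'(\bx'_*)] - (\sigma'^2/\sigma^2)[\nabla f(\bx) - \nabla f(\bx_*)]$, apply Young's inequality, and use the Lipschitz bounds $L,L'$ together with $\|\bx-\bx'_*\|^2 \leq 2\|\bx-\bx_*\|^2 + 2\|\bx_*-\bx'_*\|^2$ to obtain a pointwise estimate of the form
\[
\Bigl\|\nabla \log \tfrac{p_\infty}{p'_\infty}\Bigr\|^2 \;\leq\; \alpha\,\|\bx-\bx_*\|^2 + \beta\,\|\bx_*-\bx'_*\|^2,
\]
with $\alpha,\beta$ explicit in $L,L',\sigma,\sigma'$ and matching the target coefficients.

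The final step is to take the expectation under $p_\infty$. Since $p_\infty \propto e^{-f/(2\sigma^2)}$ is strongly log-concave, a Brascamp--Lieb estimate (or equivalently an integration by parts using $\nabla p_\infty = -(\nabla f/(2\sigma^2))p_\infty$ combined with the strong-convexity inequality $\langle \nabla f(\bx), \bx - \bx_*\rangle \geq \kappa \|\bx-\bx_*\|^2$) yields the second-moment bound $\mathbb{E}_{p_\infty}[\|\bx - \bx_*\|^2] \lesssim \sigma^2/\kappa$, which matches the $\sigma^2/(2\kappa)$ factor inside the braces in the stated result. Substituting everything back and collecting constants finishes the proof. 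This mirrors the calculation behind Proposition \ref{lem:KLbound} but is strictly simpler, because working directly with the stationary laws removes the time-dependent Gr\"onwall step and the accompanying $4 + C_0 \sigma'^2\kappa'$ prefactor.

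The main obstacle is purely bookkeeping in the Young--Lipschitz step: one has to pick the algebraic decomposition of $\nabla f' - (\sigma'^2/\sigma^2)\nabla f$ and the Young weights so that, after multiplying by $1/(4\sigma'^4)$ and by the LSI constant $C'_\infty/4$ and inserting the $\mathbb{E}_{p_\infty}[\|\bx-\bx_*\|^2]$ bound, the coefficients consolidate exactly into $(\sigma'^2 L^2)/(\sigma^2 L'^2)+2$ and $1$ in front of the $\sigma^2/(2\kappa)$ and $\|\bx_* - \bx'_*\|^2$ contributions. No genuinely new estimate is needed beyond what already appears in the proof of Proposition \ref{lem:KLbound}.
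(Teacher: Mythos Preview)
Your proposal is correct and follows essentially the same route as the paper: apply the LSI for $p'_\infty$ to get \eqref{eq:lsi_property}, compute $\nabla\log(p_\infty/p'_\infty)$ explicitly, and then control $\int p_\infty\|\nabla f - \tfrac{\sigma^2}{\sigma'^2}\nabla f'\|^2$ via the critical-point/Lipschitz splitting together with a second-moment bound on $\|\bx-\bx_*\|^2$ under $p_\infty$. The paper packages that last step as an appeal to Lemma~\ref{lem:bound_nablas} with $M=\sigma^2/\sigma'^2$ (which in turn uses Lemma~\ref{lem:convergence_app} at $t\to\infty$ for the second moment), whereas you invoke Brascamp--Lieb directly; both give the same $\sigma^2/\kappa$-type control, so the difference is purely cosmetic.
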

\begin{proof}
 $p_\infty(\bx) d\bx$ satisfies a LSI with constant $C=\frac{4}{\sigma^2\kappa}$ and similarly for $p_\infty'(\bx) d\bx$ with $C'=\frac{4}{\sigma'^2\kappa'}$.
Since the second LSI holds (with $C'$) from \eqref{eq:lsi_property} we get 
\begin{align} 
    KL(p_\infty||p_\infty') 
    \leq C' \int \norm{\nabla \sqrt{\frac{p_\infty}{p_\infty'}}}^2p' d\bx
\end{align}
Furthermore, we have
\begin{align} \label{eq:lsi_prior}
    \norm{\nabla \sqrt{\frac{p_\infty}{p_\infty'}}}^2 = \norm{\frac{1}{2}\sqrt{\frac{p_\infty}{p_\infty'}}\nabla \log \frac{p_\infty}{p_\infty'}}^2   \leq \frac{p_\infty}{p_\infty'}\frac{1}{4}\norm{\nabla \log \frac{p_\infty}{p_\infty'}}^2
\end{align}
and
\begin{align}\label{eq:bound_grad_invar}
    \norm{\nabla \log \frac{p_\infty}{p_\infty'}}^2  = \norm{\nabla  \log\frac{Z'}{Z} + \frac{1}{2\sigma^2}\nabla f(\bx)-\frac{1}{2\sigma'^2}f'(\bx)}^2 \leq \frac{||\nabla f(\bx)-\frac{\sigma^2}{\sigma'^2}\nabla f'(\bx))||^2}{4\sigma^4}.
\end{align}
Putting everything together we get
\begin{align}
    KL(p_\infty||p_\infty') \leq  \frac{C'}{4\sigma^4} \int p_\infty'(\bx) \frac{p_\infty(\bx)}{p_\infty'(\bx)}||\nabla f(\bx)-\frac{\sigma^2}{\sigma'^2}\nabla f'(\bx))||^2d\bx
\end{align}
Using Lemma \ref{lem:bound_nablas} with ${M}=\frac{\sigma^2}{\sigma'^2}$ we get
\begin{align}
    KL(p_\infty||p_\infty') \leq  \frac{C'}{4\sigma^4}2(\frac{\sigma^4}{\sigma'^4})L'^2 \left\{2(\frac{\sigma'^2 L^2}{\sigma^2 L'^2}+2)\frac{\sigma^2}{2\kappa} + ||\bx_*-\bx'_*||^2\right\}
\end{align}
and the required result.
\end{proof}
We note a similar result has been derived previously in \cite{minami2015differential,minami2016differential}, but with the very restrictive assumption that $f,f'$ are Lipschitz continuous or $||\nabla f||,||\nabla f'||$ are uniformly bounded. 
\subsection{Auxiliary results} \label{app:convergence}
\begin{lemma}\label{lem:bound_nablas}
Consider the dynamics in \eqref{eq:langevin_standard} and suppose Assumption \ref{ass:1} holds. Let $\bx_*,\bx'_*$ be the minimisers of $f$ and $f'$ respectively and $M$ be a positive constant.  Then we have  
\[\int_{\mathbb{R}^d} ||\nabla f(\bx)-M\nabla f'(\bx)||^2 p_s(\bx)d\bx \leq {\xi_t}\]
with the time varying bound $\xi_t=2M^2L'^2 \left\{2(\frac{L^2}{M^2L'^2}+2)(\frac{\sigma^2}{2\kappa} +e^{-2\kappa t})+ ||\bx_*-\bx'_*||^2\right\}$. 
\end{lemma}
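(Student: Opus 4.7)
The plan is to bound the integrand $\|\nabla f(\bx) - M\nabla f'(\bx)\|^2$ pointwise in $\bx$ by a linear combination of $\|\bx - \bx_*\|^2$ and the deterministic quantity $\|\bx_* - \bx'_*\|^2$, and then integrate against $p_s(\bx)d\bx$ using Lemma~\ref{lem:convergence} to control $\mathbb{E}_{p_s}\|\bx - \bx_*\|^2$. Since the only quantitative fact about the flow that has been proved upstream is a second moment bound around $\bx_*$, the goal is to reduce everything to that single quantity.

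First I would exploit the optimality conditions $\nabla f(\bx_*) = 0$ and $\nabla f'(\bx'_*) = 0$ with the telescoping decomposition
\[
\nabla f(\bx) - M\nabla f'(\bx) = [\nabla f(\bx) - \nabla f(\bx_*)] - M[\nabla f'(\bx) - \nabla f'(\bx_*)] - M[\nabla f'(\bx_*) - \nabla f'(\bx'_*)].
\]
Each bracket is a centred gradient increment to which the Lipschitz bounds of Assumption~\ref{ass:1} apply. The third bracket is a deterministic ``shift'' that accounts for the fact that $\bx_*$ is not a critical point of $f'$; crucially, routing the shift through this bracket makes it pick up only the Lipschitz constant $L'$ and not $L$ as well.

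Applying the triangle inequality together with Lipschitz continuity gives $\|\nabla f(\bx) - M\nabla f'(\bx)\| \leq (L + ML')\|\bx - \bx_*\| + ML'\|\bx_* - \bx'_*\|$. Squaring via $(a+b)^2 \leq 2a^2 + 2b^2$ and using $(L + ML')^2 \leq 2(L^2 + M^2L'^2)$ yields
\[
\|\nabla f(\bx) - M\nabla f'(\bx)\|^2 \leq 4(L^2 + M^2L'^2)\|\bx - \bx_*\|^2 + 2M^2L'^2\|\bx_* - \bx'_*\|^2,
\]
which, after factoring out $2M^2L'^2$ and using $\frac{L^2}{M^2L'^2} + 1 \leq \frac{L^2}{M^2L'^2} + 2$, rewrites as
\[
\|\nabla f(\bx) - M\nabla f'(\bx)\|^2 \leq 2M^2L'^2 \Bigl[\,2\bigl(\tfrac{L^2}{M^2L'^2}+2\bigr)\|\bx-\bx_*\|^2 + \|\bx_*-\bx'_*\|^2\,\Bigr].
\]

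To conclude, I would integrate this pointwise inequality against $p_s(\bx)d\bx$ and substitute the second moment bound $\mathbb{E}_{p_s}\|\bx - \bx_*\|^2 \leq e^{-2\kappa s} + \frac{\sigma^2}{2\kappa}$ provided by Lemma~\ref{lem:convergence} (with the convention that the diffusion term in \eqref{eq:langevin_standard} contributes $\sigma^2$ to $\tfrac{1}{2\kappa}\mathrm{Tr}(\Sigma)$), which gives exactly $\xi_t$. I expect the only real subtlety to lie in the decomposition in the first step: a naive splitting such as $\nabla f(\bx) - M\nabla f'(\bx) = [\nabla f(\bx) - \nabla f(\bx_*)] - M[\nabla f'(\bx) - \nabla f'(\bx'_*)]$ followed by $\|\bx - \bx'_*\|^2 \leq 2\|\bx - \bx_*\|^2 + 2\|\bx_* - \bx'_*\|^2$ produces a coefficient of $4M^2L'^2$ in front of $\|\bx_* - \bx'_*\|^2$, instead of the tighter $2M^2L'^2$ appearing in $\xi_t$. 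Threading the minimiser $\bx_*$ through both gradient differences simultaneously is what recovers the correct constant, and this tracking of constants (rather than any deep ingredient) is the only non-mechanical part of the argument.
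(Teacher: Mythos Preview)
Your proof is correct and follows essentially the paper's approach: insert the optimality conditions $\nabla f(\bx_*)=\nabla f'(\bx'_*)=0$, apply the Lipschitz bounds, square via $\|a+b\|^2\le 2\|a\|^2+2\|b\|^2$, and finish with the second-moment estimate of Lemma~\ref{lem:convergence}. The only difference is bookkeeping: the paper centres the $\nabla f'$ increment at $\bx'_*$ and then uses $\|\bx-\bx'_*\|^2\le 2\|\bx-\bx_*\|^2+2\|\bx_*-\bx'_*\|^2$, whereas your three-term decomposition centres everything at $\bx_*$ from the outset---and, as you correctly observe, it is your variant that directly produces the coefficient $2M^2L'^2$ on $\|\bx_*-\bx'_*\|^2$ stated in $\xi_t$.
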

\begin{proof}
    Consider the following decomposition
\begin{align}
    ||\nabla f(\bx)-\nabla f(\bx_*)+\nabla f'(\bx'_*)-\nabla f'(\bx)|| ^2&\leq 
 2L^2||\bx-\bx_*||^2+2M^2L'^2||\bx-\bx'_*||^2\\
   & \leq  2(L^2+2M^2L'^2)||\bx-\bx_*||^2+2M^2L'^2||\bx_*-\bx'_*||^2,
\end{align}
where we have used the Lipschitz properties, and $\nabla f(x^*)=\nabla f(x'^*)=0$ and $||a+b||^2\leq 2(||a||^2+||b||^2)$.
Taking expectations w.r.t. $p_s(\bx)d\bx$ and applying Lemma \ref{lem:convergence_app} below gives
\[\int_{\mathbb{R}^d} ||\nabla f(\bx)-\nabla f'(\bx)||^2 p_s(\bx)d\bx \leq   \underbrace{4 (L^2+2M^2L'^2)(e^{-2\kappa t}+\frac{\sigma^2}{4\kappa}) + 2M^2L'^2||\bx_*-\bx'_*||^2}_{\xi_t}\]
and we conclude.
\end{proof}

\begin{lemma}\label{lem:convergence_app}
Assume $f$ is a $\kappa$-strongly convex function and let $\bx_*$ be the minimiser of $f$. Consider the dynamics in \eqref{eq:SGD_dynamics}. It then holds, 
\begin{align}
    \frac{1}{2}\mathbb{E}[||\bx_t-\bx_*||_2^2] 
    &\leq e^{-2\kappa t} + \frac{\textnormal{Tr}(\Sigma)}{4\kappa}. 
\end{align}
\end{lemma}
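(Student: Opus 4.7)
The plan is to apply Itô's formula to the Lyapunov function $V(\bx) = \tfrac{1}{2}\|\bx - \bx_*\|^2$ and then invoke strong convexity together with Gr\"onwall's inequality, which is the standard template for exponential contraction of an overdamped Langevin diffusion toward its minimiser. Reading the SDE in \eqref{eq:SGD_dynamics} with diffusion coefficient $\Sigma^{1/2}(\bx_t)$ (consistent with the convention $\Sigma = \Sigma^{1/2}(\Sigma^{1/2})^T$ used in Section~\ref{sec:theory}), I compute $\nabla V(\bx) = \bx - \bx_*$ and $HV(\bx) = I$, so Itô gives
\begin{align}
dV(\bx_t) = -(\bx_t - \bx_*)^T \nabla f(\bx_t)\, dt + (\bx_t - \bx_*)^T \Sigma^{1/2}(\bx_t)\, dW_t + \tfrac{1}{2}\mathrm{Tr}(\Sigma(\bx_t))\, dt.
\end{align}

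Next I would take expectations. Under the standing dissipativity condition on $\mathbf{b}=-\nabla f$ from Section~\ref{sec:theory}, $\bx_t$ has uniformly bounded second moments, so $(\bx_t-\bx_*)^T \Sigma^{1/2}(\bx_t)$ is square-integrable in $(t,\omega)$ and the stochastic integral is a true martingale with zero mean. Strong $\kappa$-convexity of $f$ together with $\nabla f(\bx_*)=0$ yields
\begin{align}
(\bx - \bx_*)^T \nabla f(\bx) = (\bx - \bx_*)^T(\nabla f(\bx)-\nabla f(\bx_*)) \geq \kappa \|\bx-\bx_*\|^2 = 2\kappa V(\bx),
\end{align}
so that, letting $u(t) := \mathbb{E}[V(\bx_t)]$,
\begin{align}
\frac{d u(t)}{dt} \leq -2\kappa\, u(t) + \tfrac{1}{2}\mathrm{Tr}(\Sigma).
\end{align}
I am reading the statement as implicitly taking $\Sigma$ constant (matching the surrounding use of $\mathrm{Tr}(\Sigma)$ as a scalar); if $\Sigma$ depends on $\bx$ one would use an upper bound $\sup_{\bx}\mathrm{Tr}(\Sigma(\bx))$ in its place.

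Gr\"onwall then gives $u(t) \leq e^{-2\kappa t} u(0) + \frac{\mathrm{Tr}(\Sigma)}{4\kappa}(1-e^{-2\kappa t})$, and bounding $u(0) \leq 1$ and $1-e^{-2\kappa t}\leq 1$ delivers the claimed estimate
\begin{align}
\tfrac{1}{2}\mathbb{E}[\|\bx_t-\bx_*\|_2^2] \leq e^{-2\kappa t} + \frac{\mathrm{Tr}(\Sigma)}{4\kappa}.
\end{align}
The only real subtlety is justifying that the stochastic integral has zero expectation and that $\mathbb{E}[V(\bx_0)] \leq 1$; the former uses the dissipativity bound to control $\mathbb{E}\|\bx_t\|^2$ uniformly on compact intervals, and the latter is a mild normalization of the initial condition $p_0$ implicit in the statement. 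No step poses a genuine obstacle; the proof is essentially a textbook energy estimate.
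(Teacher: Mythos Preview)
Your proof is correct and follows essentially the same route as the paper: apply It\^o's formula to $V_t=\tfrac{1}{2}\|\bx_t-\bx_*\|^2$, use $\kappa$-strong convexity to obtain $dV_t\leq -2\kappa V_t\,dt+\tfrac{1}{2}\mathrm{Tr}(\Sigma)\,dt+\text{martingale}$, take expectations, and apply Gr\"onwall. You are also right to flag the implicit normalization $\mathbb{E}[V(\bx_0)]\leq 1$ and the martingale property of the stochastic integral; the paper simply asserts both without comment.
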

\begin{proof}
Let $V_t = \frac{1}{2}||\bx_t-\bx_*||^2_2$. Using It\^o's lemma,
\begin{align}
    dV_t &= -(\bx_t-\bx_*)^T\nabla f(\bx_t)dt + \frac{1}{2}\textnormal{Tr}(\Sigma) + (\bx_t-\bx_*)^T\Sigma^{1/2} d\mathbf{W}_t\\
    &\leq -2\kappa V_tdt+ \frac{1}{2}\textnormal{Tr}(\Sigma) + (\bx_t-\bx_*)^T\Sigma^{1/2} d\mathbf{W}_t.
\end{align}
Integrating, using Gr\"onwall's inequality, taking expected values, using the fact that the It\^o stochastic integral is a martingale the result follows. 
\end{proof}
From the above result, as one would expect, noise with a smaller trace of the product of the covariance matrices better preserves the accuracy. This leads to the privacy-accuracy tradeoff: while more noise is needed to preserve the privacy, less noise is better for the accuracy. 


\section{Auxiliary results for the quadratic objective function} \label{sec:quad_appx}
Consider the quadratic function
\begin{align}\label{eq:quadratic_obj}
    f(\bx) = ||\mathbf{B}\bx-\mathbf{b}||_2^2.
\end{align}
Observe that for this objective function the dynamics in \eqref{eq:SGD_dynamics} are given by, 
\begin{align}\label{eq:general_ou}
    d\bx_t = -\mathbf{B}^T(\mathbf{B}\bx_t-\mathbf{b})dt+\Sigma^{1/2} d\mathbf{W}_t.
\end{align}
We will at times make the following assumption
\begin{align}\label{eq:condition_reversible}
    \mathbf{B}^T\mathbf{B}\Sigma= \Sigma\mathbf{B}^T\mathbf{B},
\end{align}
which ensures that the detailed balance condition holds and, therefore, the process $\bx_t$ is reversible with respect to the Gaussian Gibbs measure, see Proposition 3 in \cite{gobet2016perturbation}) and~\cite{pavliotis14book}[Ch. 3].

The following is a well-known, result (see e.g. Section 5.6 in \cite{karatzas2012brownian}). 
\begin{proposition}[Solution and distribution for the quadratic objective]\label{prop:solution_quadratic}
Consider the process in \eqref{eq:general_ou}. The solution is given by, 
\begin{align}
    \bx_t = e^{-\mathbf{B}^T\mathbf{B}t}\left(\bx_0 +\int_0^te^{\mathbf{B}^T\mathbf{B}s}\mathbf{B}^T\mathbf{b}ds + \int_0^t e^{\mathbf{B}^T\mathbf{B}s}\bSigma d\mathbf{W}_s\right).
\end{align}
The mean and covariance are given by, \begin{align}
    m_t &= \mathbb{E}[\bx_t] = e^{-\mathbf{B}^T\mathbf{B}t}\left(\bx_0 +\int_0^te^{\mathbf{B}^T\mathbf{B}s}\mathbf{B}^T\mathbf{b}ds\right),\\
    V_{t,s} &= \mathbb{E}[(\bx_s-m_s)(\bx_t-m_t)^T] \\
    &= e^{-\mathbf{B}^T\mathbf{B}s}\left(V_{0,0} + \int_0^{t\wedge s}e^{\mathbf{B}^T\mathbf{B}u}\bSigma\bSigma^T\left(e^{\mathbf{B}^T\mathbf{B}u}\right)^Tdu\right)\left(e^{-\mathbf{B}^T\mathbf{B}t}\right)^T,
\end{align}
where $t\wedge s=\min\{t,s\}$ and we will use the shorthand notation $V_t:=V_{t,t}$.  
The conditional distribution of $\bx_t|\bx_0$ is Gaussian, 
i.e. $\bx_t|\bx_0 \sim    \mathcal{N}\left(m_t,V_{t,t}\right)$
with
\begin{align} 
    m_t = \left(I_{dN} - e^{-\mathbf{B}^T\mathbf{B}t} \right) (\mathbf{B}^T\mathbf{B})^{-1}\mathbf{B}^T\mathbf{b},
\end{align} 
and using \eqref{eq:condition_reversible}: 
\begin{align}
    V_{t} &=\frac{1}{2} \left(I_{dN}-e^{-2\mathbf{B}^T\mathbf{B}t}\right)(\mathbf{B}^T\mathbf{B})^{-1}\Sigma.
\end{align}
\end{proposition}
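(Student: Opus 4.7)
The plan is to apply the standard integrating-factor trick for linear SDEs. First I would consider the process $Y_t := e^{\mathbf{B}^T\mathbf{B}\,t}\bx_t$ and apply It\^o's lemma; since the coefficient matrix $\mathbf{B}^T\mathbf{B}$ is deterministic, the usual chain rule gives $dY_t = \mathbf{B}^T\mathbf{B}\, e^{\mathbf{B}^T\mathbf{B}\,t}\bx_t\,dt + e^{\mathbf{B}^T\mathbf{B}\,t}\,d\bx_t$, and substituting the dynamics in \eqref{eq:general_ou} cancels the stiff term, leaving $dY_t = e^{\mathbf{B}^T\mathbf{B}\,t}\mathbf{B}^T\mathbf{b}\,dt + e^{\mathbf{B}^T\mathbf{B}\,t}\bSigma\,dW_t$. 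Integrating from $0$ to $t$ and multiplying back by $e^{-\mathbf{B}^T\mathbf{B}\,t}$ gives the stated closed-form expression for $\bx_t$.

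Next I would take expectations. The It\^o integral is a martingale with zero mean under the standard integrability conditions, so $m_t$ reduces to the deterministic part. Evaluating $\int_0^t e^{\mathbf{B}^T\mathbf{B}\,s}\mathbf{B}^T\mathbf{b}\,ds = (\mathbf{B}^T\mathbf{B})^{-1}(e^{\mathbf{B}^T\mathbf{B}\,t}-I)\mathbf{B}^T\mathbf{b}$ using invertibility of $\mathbf{B}^T\mathbf{B}$ (guaranteed by the strong convexity inherent in the quadratic setup) and combining with the factor $e^{-\mathbf{B}^T\mathbf{B}\,t}$ yields the formula for $m_t$ (reading the expression as conditional on the initial value collapsing into the $V_{0,0}$ term, consistent with the statement).

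For the covariance, I would compute $V_{t,s} = \mathbb{E}[(\bx_s-m_s)(\bx_t-m_t)^T]$ from the explicit solution. The martingale parts decouple from $\bx_0$, and It\^o's isometry produces
\begin{equation}
V_{t,s} = e^{-\mathbf{B}^T\mathbf{B}\,s}\left(V_{0,0} + \int_0^{t\wedge s} e^{\mathbf{B}^T\mathbf{B}\,u}\bSigma\bSigma^T e^{(\mathbf{B}^T\mathbf{B})^T u}\,du\right) e^{-(\mathbf{B}^T\mathbf{B})^T t}.
\end{equation}
To collapse this into the stated formula for $V_t$, I would invoke the commutativity condition \eqref{eq:condition_reversible}: since $\mathbf{B}^T\mathbf{B}$ is symmetric and commutes with $\Sigma$, the exponentials commute with $\Sigma$, so $e^{\mathbf{B}^T\mathbf{B}\,u}\Sigma e^{\mathbf{B}^T\mathbf{B}\,u} = \Sigma\, e^{2\mathbf{B}^T\mathbf{B}\,u}$. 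The time integral then evaluates to $\tfrac{1}{2}(\mathbf{B}^T\mathbf{B})^{-1}(e^{2\mathbf{B}^T\mathbf{B}\,t}-I)$, and combining with the enveloping $e^{-\mathbf{B}^T\mathbf{B}\,t}$ factors (again using commutativity) produces the stated closed form. Gaussianity of $\bx_t\mid\bx_0$ is then immediate because $\bx_t$ is an affine functional of the Brownian motion.

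The only non-routine step is the careful bookkeeping with the commutativity assumption: without \eqref{eq:condition_reversible} one obtains only the general integral form for $V_{t,s}$, and the simplification to $\tfrac{1}{2}(I_{dN}-e^{-2\mathbf{B}^T\mathbf{B}\,t})(\mathbf{B}^T\mathbf{B})^{-1}\Sigma$ relies essentially on being able to move $\Sigma$ past the matrix exponentials. Everything else is the textbook linear-SDE computation.
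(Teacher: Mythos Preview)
Your proposal is correct and is precisely the standard variation-of-constants / integrating-factor argument for linear SDEs that the paper has in mind: the paper does not actually supply a proof for this proposition, instead citing it as a well-known result (Karatzas--Shreve, Section~5.6), and the explicit integrations you outline are exactly the ones the paper carries out later in the proof of Lemma~\ref{lem:invar_quadratic}. There is nothing to add.
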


For the invariant measure we obtain the following. 
\begin{lemma}[Invariant measure for the quadratic objective function]\label{lem:invar_quadratic}
If the eigenvalues of $-\mathbf{B}^T\mathbf{B}$ have negative real parts (i.e. it is a \emph{Hurwitz} matrix), then the invariant measure is unique and given by the Gaussian, 
\begin{align}
    \mathcal{N}\left(\int_0^\infty e^{-\mathbf{B}^T\mathbf{B}(t-s)}\mathbf{B}^T\mathbf{b}ds , \int_0^{\infty}e^{-\mathbf{B}^T\mathbf{B}(t-u)}\Sigma\left(e^{-\mathbf{B}^T\mathbf{B}(t-u)}\right)^Tdu\right) =: \mathcal{N}(\mathbf{m}_\infty,V_{\infty})
\end{align}
The mean can be solved for explicitly, 
\begin{align}
    m_{\infty} = (\mathbf{B}^T\mathbf{B})^{-1}\mathbf{B}^T\mathbf{b}.
\end{align}
Finally, if condition \ref{eq:condition_reversible} holds,
then, 
\begin{align}
    V_{\infty} = \frac{1}{2}(\mathbf{B}^T\mathbf{B})^{-1}\Sigma.
\end{align}
\end{lemma}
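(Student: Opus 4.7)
The plan is to derive the invariant measure directly by taking $t \to \infty$ in the explicit expressions from Proposition \ref{prop:solution_quadratic}, then verify uniqueness from the Hurwitz assumption. First, I would observe that since $-\mathbf{B}^T\mathbf{B}$ is Hurwitz, the matrix exponential $e^{-\mathbf{B}^T\mathbf{B} t}$ decays to zero and $\lVert e^{-\mathbf{B}^T\mathbf{B} t}\rVert \leq c e^{-\lambda t}$ for some $\lambda>0$, so the process forgets its initial condition exponentially fast. This yields Gaussianity of the limiting law (linear SDE with constant diffusion and stable drift), so it suffices to compute the limiting mean and covariance.

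For the mean, I would take $t\to\infty$ in $m_t$ of Proposition \ref{prop:solution_quadratic}. The term $e^{-\mathbf{B}^T\mathbf{B} t}\bx_0$ vanishes, and for the drift term I would use the standard identity $\int_0^\infty e^{-A s}\, ds = A^{-1}$ valid whenever $A$ is positive definite (which holds here for $A=\mathbf{B}^T\mathbf{B}$ as it has positive eigenvalues by the Hurwitz assumption on $-\mathbf{B}^T\mathbf{B}$). This immediately delivers $m_\infty=(\mathbf{B}^T\mathbf{B})^{-1}\mathbf{B}^T\mathbf{b}$.

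For the covariance, I would similarly pass to the limit in $V_t$ from Proposition \ref{prop:solution_quadratic}. Under the commutation/reversibility condition \eqref{eq:condition_reversible}, $\mathbf{B}^T\mathbf{B}$ and $\Sigma$ are simultaneously diagonalisable, so $e^{-\mathbf{B}^T\mathbf{B} u}\Sigma=\Sigma e^{-\mathbf{B}^T\mathbf{B} u}$ and $e^{-\mathbf{B}^T\mathbf{B} u}(e^{-\mathbf{B}^T\mathbf{B} u})^T = e^{-2\mathbf{B}^T\mathbf{B} u}$. The covariance integral then collapses to $\Sigma\int_0^\infty e^{-2\mathbf{B}^T\mathbf{B} u}\, du = \tfrac{1}{2}(\mathbf{B}^T\mathbf{B})^{-1}\Sigma$, which can also be recognised as the unique solution of the continuous Lyapunov equation $\mathbf{B}^T\mathbf{B}\,V_\infty + V_\infty \mathbf{B}^T\mathbf{B} = \Sigma$.

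Finally, for uniqueness I would invoke that Hurwitzness of the drift matrix of a linear SDE with non-degenerate diffusion implies a standard Lyapunov drift condition (take the quadratic form $W(\bx)=\tfrac12\lVert \bx-m_\infty\rVert^2$ and use strict positive-definiteness of $\mathbf{B}^T\mathbf{B}$), which ensures existence and uniqueness of the invariant measure via \cite{pavliotis14book}[Ch.~4]; combined with the explicit Gaussian limit above, this identifies it. The main subtlety is really only the commutation step: without \eqref{eq:condition_reversible}, one cannot pull $\Sigma$ through the exponential and the integral must be left in implicit Lyapunov form, which explains why the closed-form $V_\infty$ requires the reversibility hypothesis.
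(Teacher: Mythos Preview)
Your proposal is correct and follows essentially the same route as the paper: both arguments pass to the limit $t\to\infty$ in the explicit mean and covariance formulas of Proposition~\ref{prop:solution_quadratic}, use the Hurwitz property to kill the transient terms, and exploit the commutation condition~\eqref{eq:condition_reversible} to reduce the covariance integral to $\tfrac{1}{2}(\mathbf{B}^T\mathbf{B})^{-1}\Sigma$. The paper is slightly terser on uniqueness (it simply cites \cite{gobet2016perturbation}), whereas you spell out a Lyapunov drift argument and note the Lyapunov-equation characterisation of $V_\infty$, but these are cosmetic additions rather than a different strategy.
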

\begin{proof}
The statement on the invariant distribution follows along the lines of Proposition 2 in \cite{gobet2016perturbation}. Since $-\mathbf{B}^T\mathbf{B}$ is a Hurwitz matrix, it holds $e^{-\mathbf{B}^T\mathbf{B}t}\rightarrow 0$ as $t\rightarrow\infty$. The stochastic integral $e^{-\mathbf{B}^T\mathbf{B}t}\int_0^t e^{\mathbf{B}^T\mathbf{B}s}\bSigma d\mathbf{W}_s$ is Gaussian, centered 
with covariance going to $V_\infty$ as $t\rightarrow\infty$ and thus the related Wiener stochastic integral converges weakly to $\mathcal{N}(0,V_\infty)$. Since,
\begin{align}
    e^{-\mathbf{B}^T\mathbf{B}t}\left(\bx_0+\int_0^te^{\mathbf{B}^T\mathbf{B}s}\mathbf{B}^T\mathbf{b}ds\right)\rightarrow \mathbf{m}_\infty,
\end{align}
the convergence follows. The explicit expressions for the mean of the invariant measure follow from integration,
\begin{align}
    \int_0^te^{\mathbf{B}^t\mathbf{B}s}ds = \left(e^{\mathbf{B}^T\mathbf{B}t}-I_{dN}\right)(\mathbf{B}^T\mathbf{B})^{-1},
\end{align} 
and similarly, using \eqref{eq:condition_reversible}, for the covariance, 
\begin{align}
    V_{t} &= \int_0^te^{-(\mathbf{B}^T\mathbf{B})(t-s)}\Sigma^Te^{-(\mathbf{B}^T\mathbf{B})^T(t-s)}ds=\int_0^te^{-2(\mathbf{B}^T\mathbf{B})(t-s)}ds\Sigma\Sigma^T\\
    &=\frac{1}{2} \left(I_{dN}-e^{-2\mathbf{B}^T\mathbf{B}t}\right)(\mathbf{B}^T\mathbf{B})^{-1}\Sigma\Sigma^T,
\end{align}
and finally using $\lim_{t\rightarrow\infty}e^{-\mathbf{B}^T\mathbf{B}t}=0$ in both. 
\end{proof}

We will make use of the following well-known result. 
\begin{lemma}[KL-divergence of multivariate Gaussians]\label{lem:kl_gaussian}
Let $\bx\in\mathbb{R}^d$; the Gaussian density with mean $\mu$ and covariance $\Sigma$ is then given by,
\begin{align}
    p(\bx)=\frac{1}{(2\pi)^{\frac{n}{2}}\textnormal{det}(\Sigma)^{\frac{1}{2}}}\exp\left(-\frac{1}{2}(\bx-\mu)^T\Sigma^{-1}(\bx-\mu)\right).
\end{align}
Observe also,
\begin{align}
    \mathbb{E}_p[\bx^T\bx-2\mu\bx+\mu^2] = \Sigma.
\end{align}
Suppose we have two densities $p,p'$ with $\mu$, $\mu'$ and $\Sigma$, $\Sigma'$. Then,
\begin{align}
    \textnormal{KL}(p||p') = \frac{1}{2}\ln \frac{\textnormal{det}(\Sigma')}{\textnormal{det}(\Sigma)} - \frac{d}{2} + \frac{1}{2} \textnormal{Tr}\left((\Sigma')^{-1}\Sigma\right)+\frac{1}{2}(\mu'-\mu)^T(\Sigma')^{-1}(\mu'-\mu)
\end{align}
\end{lemma}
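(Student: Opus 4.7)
The plan is to compute the KL divergence directly from its definition $\textnormal{KL}(p\|p') = \mathbb{E}_p[\log(p/p')]$, by substituting the two explicit Gaussian densities. Writing out the log-ratio pointwise yields
\[
\log\frac{p(\bx)}{p'(\bx)} = \tfrac{1}{2}\log\frac{\textnormal{det}(\Sigma')}{\textnormal{det}(\Sigma)} - \tfrac{1}{2}(\bx-\mu)^T\Sigma^{-1}(\bx-\mu) + \tfrac{1}{2}(\bx-\mu')^T(\Sigma')^{-1}(\bx-\mu'),
\]
and the log-determinant term passes through the expectation $\mathbb{E}_p$ unchanged. The remaining work is therefore to evaluate the expectations of the two quadratic forms.

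For the first quadratic form I would invoke the standard trace identity---a scalar equals its trace, and expectation commutes with the linear trace operation---to write
\[
\mathbb{E}_p\bigl[(\bx-\mu)^T\Sigma^{-1}(\bx-\mu)\bigr] = \textnormal{Tr}\bigl(\Sigma^{-1}\mathbb{E}_p[(\bx-\mu)(\bx-\mu)^T]\bigr) = \textnormal{Tr}(\Sigma^{-1}\Sigma) = d,
\]
contributing $-d/2$ to the final expression. For the second quadratic, I would split $\bx - \mu' = (\bx-\mu) + (\mu-\mu')$ and expand. The cross-term is linear in $\bx-\mu$ and has zero mean under $p$, so only the two quadratic pieces survive, giving $\textnormal{Tr}((\Sigma')^{-1}\Sigma) + (\mu-\mu')^T(\Sigma')^{-1}(\mu-\mu')$. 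Combining these contributions with the log-det term reproduces the stated identity.

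There is no genuine obstacle here: the result is a classical identity and the argument reduces to careful bookkeeping. The only subtlety worth flagging is that one must expand the second quadratic around the \emph{mean} $\mu$ of the measure $p$ used to take the expectation (and not around $\mu'$), so that the linear cross-term cleanly vanishes and one can re-use the covariance identity from the first computation.
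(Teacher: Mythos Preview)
Your argument is correct and is the standard derivation of this classical identity. The paper itself does not supply a proof---it simply labels the lemma a ``well-known result'' and states it---so there is nothing to compare against; your write-up would serve as a perfectly adequate proof where the paper has none.
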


Then, 
\begin{align}\label{eq:kl_div_nonasym_gaussian}
    &KL(p_t||p_t')= \frac{1}{2}\ln \frac{\textnormal{det}(V'_t)}{\textnormal{det}(V_t)} - \frac{d}{2} + \frac{1}{2} \textnormal{Tr}\left((V_t')^{-1}V_t\right)+\frac{1}{2}(\mathbf{\hat m}_t-\mathbf{m}_t)^T(V'_t)^{-1}(\mathbf{\hat m}_t-\mathbf{m}_t),
\end{align}
and similarly for the invariant measures. 

Consider the error to optimality as $\mathbb{E}[||\bx_t-\bx_*||_2^2]$. 
\begin{lemma}[Error for quadratic]\label{lem:accuracy_quad}
It holds,
\begin{align}
    \mathbb{E}[||\bx_t-\bx_*||_2^2] = \int_0^t\bigg|\bigg|e^{\mathbf{B}^T\mathbf{B}(u-t)}\Sigma^{1/2}\bigg|\bigg|_F^2du,
\end{align}
with $||A||_F^2=Tr(A^TA)$ for $A\in\mathbb{R}^{d\times d}$. As $t\rightarrow\infty$, it holds
\begin{align}
    \mathbb{E}[||\bx_\infty-\bx_*||_2^2] =\frac{1}{2} \textnormal{Tr}\left((\Sigma^{1/2})^T(\mathbf{B}^T\mathbf{B})^{-1}\Sigma^{1/2}\right).
\end{align}
\end{lemma}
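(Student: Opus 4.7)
\textbf{Proof plan for Lemma \ref{lem:accuracy_quad}.} The plan is to exploit the explicit expression for $\bx_t$ given in Proposition \ref{prop:solution_quadratic} and reduce the left-hand side to an It\^o isometry calculation. First I would observe that for the quadratic objective $f(\bx)=\|\mathbf{B}\bx-\mathbf{b}\|_2^2$ the unique minimiser is $\bx_*=(\mathbf{B}^T\mathbf{B})^{-1}\mathbf{B}^T\mathbf{b}$. Using this together with the solution formula and the identity $\int_0^t e^{\mathbf{B}^T\mathbf{B}s}\mathbf{B}^T\mathbf{b}\,ds=(e^{\mathbf{B}^T\mathbf{B}t}-I)\bx_*$, I would rewrite
\begin{align}
\bx_t-\bx_* = e^{-\mathbf{B}^T\mathbf{B}t}(\bx_0-\bx_*) + \int_0^t e^{\mathbf{B}^T\mathbf{B}(s-t)}\Sigma^{1/2}\,d\mathbf{W}_s.
\end{align}
Under the implicit assumption that the dynamics are initialised at $\bx_0=\bx_*$ (otherwise an exponentially decaying transient appears, consistent with the remark preceding the statement), the deterministic term vanishes and only the stochastic integral remains.

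Next I would apply the It\^o isometry to the martingale part: since the stochastic integrand is deterministic in time, the square of its $L^2$ norm equals the integral of its squared Frobenius norm, giving
\begin{align}
\mathbb{E}\bigl[\|\bx_t-\bx_*\|_2^2\bigr]=\int_0^t\bigl\|e^{\mathbf{B}^T\mathbf{B}(u-t)}\Sigma^{1/2}\bigr\|_F^2\,du,
\end{align}
which is the first assertion. This step is standard once the solution has been isolated and uses $\|A\|_F^2=\mathrm{Tr}(A^TA)$.

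For the $t\to\infty$ statement I would apply the cyclic property of the trace together with the symmetry of $\mathbf{B}^T\mathbf{B}$ to rewrite the integrand as $\mathrm{Tr}\bigl(e^{2\mathbf{B}^T\mathbf{B}(u-t)}\Sigma\bigr)$ and pull the trace outside the time integral. The change of variables $v=u-t$ yields
\begin{align}
\int_0^t e^{2\mathbf{B}^T\mathbf{B}(u-t)}\,du = \tfrac{1}{2}(\mathbf{B}^T\mathbf{B})^{-1}\bigl(I-e^{-2\mathbf{B}^T\mathbf{B}t}\bigr),
\end{align}
and letting $t\to\infty$ (which is justified because $-\mathbf{B}^T\mathbf{B}$ is Hurwitz by Lemma \ref{lem:invar_quadratic}) gives $\tfrac{1}{2}(\mathbf{B}^T\mathbf{B})^{-1}$. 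A final application of the cyclic property recasts $\mathrm{Tr}((\mathbf{B}^T\mathbf{B})^{-1}\Sigma)$ as $\mathrm{Tr}((\Sigma^{1/2})^T(\mathbf{B}^T\mathbf{B})^{-1}\Sigma^{1/2})$, matching the claimed form.

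The only genuine technical point I anticipate is justifying the trace/matrix exponential manipulations: the step that turns $\mathrm{Tr}(e^{\mathbf{B}^T\mathbf{B}(u-t)}\Sigma e^{\mathbf{B}^T\mathbf{B}(u-t)})$ into $\mathrm{Tr}(e^{2\mathbf{B}^T\mathbf{B}(u-t)}\Sigma)$ relies on the symmetry of $\mathbf{B}^T\mathbf{B}$ and the cyclic property (not on the commutation assumption \eqref{eq:condition_reversible}), so the result holds without imposing reversibility. Beyond this, the argument is essentially an application of It\^o isometry and elementary matrix calculus, and no further obstacle is expected.
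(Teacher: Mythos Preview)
Your proposal is correct and follows essentially the same approach as the paper: identify $\bx_*=(\mathbf{B}^T\mathbf{B})^{-1}\mathbf{B}^T\mathbf{b}$ and apply the multidimensional It\^o isometry to the stochastic integral part of the explicit solution. The paper's proof is a two-line sketch invoking exactly these two ingredients, whereas you have spelled out the intermediate decomposition, the implicit initial condition $\bx_0=\bx_*$, and the trace manipulations for the limit; your remark that only the symmetry of $\mathbf{B}^T\mathbf{B}$ (not the commutation condition \eqref{eq:condition_reversible}) is needed for the $t\to\infty$ computation is a nice clarification that the paper does not make explicit.
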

\begin{proof}
For the Gaussian case, $\bx_* = (\mathbf{B}^T\mathbf{B})^{-1}\mathbf{B}^T\mathbf{b}$. 
Using the multidimensional It\^o isometry (see \eg (3.21) in \cite{pavliotis14book}) the result follows. 
\end{proof}

\section{Estimation of the privacy risk} \label{sec:alg_epsilon_delta}

\begin{algorithm}
\begin{algorithmic}
\STATE {\bfseries Input:} Fix an $\epsilon$.\\
   \textbf{for} $t_1=1,...,T_1$\\
   \hspace{5mm} Sample adjacent $D,D'$ and initialize $n_{t_0}=0$\\
   \hspace{5mm} \textbf{for} $t_2=1,...,T_2$:\\
    \hspace{10mm} train two models on $D,D'$ with outputs $p$ and $p'$\\
   \hspace{10mm} \textbf{for} $(\bx_1,c_1),...,(\bx_N,c_N)\in D$:\\
   \hspace{15mm}\textbf{if} $\ln \frac{p_{c_i}(\bx_i)}{p'_{c_i}(\bx_i)}>\epsilon$:\\
   \hspace{20mm}$n_{t_0}=n_{t_0}+1$\\
   \hspace{5mm} Return $\delta{t_1} = n_{t_0}/(T_1*N)$\\
   \hspace{0mm}Return $\max\{\delta_{t_1}\}_{t_1}^{T_1}$
\end{algorithmic}
\caption{Algorithm to estimate $\delta$ for a fixed $\epsilon$.}\label{alg:1}
\end{algorithm}

We use the methodology in Algorithm \ref{alg:1} to estimate $\delta$ for a fixed $\epsilon$. We use a classifier with a softmax output layer and we interpret the outputs $p_k$ as probabilities of the input belonging to a certain output class $k=1,...,K$. As it is numerically challenging to choose a correct output set for a high-dimensional parameter vector, we use the  the probability of the output assigned to the correct class instead of the distribution over the parameters themselves. 
The computed $\delta$ can be seen as a \emph{lower bound} on the true $\delta$ in $(\epsilon,\delta)$-DP.  

In the numerical experiments we set $T_1=10$ and for each $t_1\in\{1,...,T_1\}$ we generate two datasets $D,D'$ and train two models $T_2=10$ times over both $D$ and $D'$. The two models have the same random seed (both in initialisation and during batch selection in training) to ensure their randomness is the same. We compare the probabilities in the output layers for the algorithms trained over the two datasets and use $p_{c_i}$, the probability assigned to the true class $c_i$ of datapoint $i$ by the softmax output layer.

\end{document}